\documentclass{article}

\usepackage[utf8]{inputenc} % allow utf-8 input
\usepackage[sort]{natbib}
\usepackage[T1]{fontenc}    % use 8-bit T1 fonts
\usepackage{hyperref}       % hyperlinks
\usepackage{url}            % simple URL typesetting
\usepackage{booktabs}       % professional-quality tables
\usepackage{amsmath,amsfonts,amsthm,bm,bbm,amssymb}
\usepackage{mathtools}
\usepackage{nicefrac}       % compact symbols for 1/2, etc.
\usepackage{microtype}      % microtypography
\usepackage{microtype}
\usepackage[pdftex]{graphicx}
\usepackage{url}
\usepackage{delimset}  
\usepackage{array}
\usepackage[ruled,vlined,linesnumbered]{algorithm2e}   % linesnumbered
\usepackage[capitalize,noabbrev]{cleveref}

\usepackage{nicefrac}
\usepackage[dvipsnames]{xcolor}

\usepackage{fullpage}
\setlength{\parskip}{3mm plus 1mm minus 1mm}
\setlength{\parindent}{0pt}

\hypersetup{colorlinks=true,citecolor=Blue,linkcolor=Blue,urlcolor=black}

\newcommand{\Real}{\mathbb{R}}
\newcommand{\opt}{^{\star}}

%%% Expectation
\newcommand{\E}{\mathbb{E}}
\DeclarePairedDelimiterXPP\expect[2]{\E_{#1}}[]{}{\setargs{#2}}%
\DeclarePairedDelimiterXPP\varian[2]{\mathbb{V}_{#1}}[]{}{\setargs{#2}}%
\DeclarePairedDelimiterXPP\probability[2]{\mathbb{P}_{#1}}[]{}{\setargs{#2}}%
\NewDocumentCommand{\setargs}{>{\SplitArgument{1}{|}}m}
{\setargsaux#1}
\NewDocumentCommand{\setargsaux}{mm}
{\IfNoValueTF{#2}{#1}{\nonscript\,#1\nonscript\;\delimsize\vert\nonscript\:\allowbreak #2\nonscript\,}}
\DeclarePairedDelimiterXPP\expectaux[3]{\E_{#1}}[]{}{#2\nonscript\:\delimsize\vert\nonscript\:#3}%

\newcommand{\Ex}[1]{\expect*{}{#1} }

\renewcommand{\Pr}[1]{\probability*{}{#1} }
\newcommand{\tr}{^{\mathsf{T}}}
\newcommand{\probs}[1]{\Delta_{#1}}

% \newcommand{\marek}[1]{\textcolor{red}{[#1]}}
% \newcommand{\guy}[1]{\textcolor{green}{[#1]}}
% \newcommand{\mgh}[1]{\textcolor{cyan}{[#1]}}

% ----------------------------------------------------------------
% Mathematical program

\newcommand{\subjectto}{\operatorname{subject\;to} &}

\newcommand{\maximize}[1]{\operatorname*{maximize}_{#1} &}
\newcommand{\cs}{\\[1ex] & }

\newenvironment{mprog}{\begin{array}{>{\displaystyle}c>{\displaystyle}l>{\displaystyle}l>{\displaystyle}l}}{\end{array}}
\newcommand{\stc}{\\[1ex] \subjectto}

% ----------------------------------------------------------------
% Other stuff

% Define risk operator

\DeclareMathOperator{\varo}{VaR}

\DeclareMathOperator{\evaro}{EVaR}

% definitions of objectives
      % the reward definition

\renewcommand{\P}[1]{\mathbb{P}\left[ #1 \right]}

\newcommand{\var}[2]{\varo_{#1} \left[#2\right]}

\newcommand{\evar}[2]{\evaro_{#1} \left[#2\right]}

\renewcommand{\cite}[1]{\citep{#1}}

%%%%%%%%%%%%%%%%%%%%%%%%%%%%%%%%
% THEOREMS
%%%%%%%%%%%%%%%%%%%%%%%%%%%%%%%%
\theoremstyle{plain}
\newtheorem{theorem}{Theorem}[section]
\newtheorem{proposition}[theorem]{Proposition}
\newtheorem{lemma}[theorem]{Lemma}

\theoremstyle{definition}

\newtheorem{assumption}[theorem]{Assumption}
\theoremstyle{remark}

\newtheorem{example}{Example}

%%%%%%%%%%%%%%%%%%%%%%%%%%%%%%%%%%%%%%%%%%%%%%%%%%
%%%%%%%%%%%%%%%%%%%%%%%%%%%%%%%%%%%%%%%%%%%%%%%%%%
%%%%%%%%%%%%%%%%%%%%%%%%%%%%%%%%%%%%%%%%%%%%%%%%%%
%%%%%%%%%%%%%%%%%%%%%%%%%%%%%%%%%%%%%%%%%%%%%%%%%%
%%%%%%%%%%%%%%%%%%%%%%%%%%%%%%%%%%%%%%%%%%%%%%%%%%

\title{Bayesian Regret Minimization in Offline Bandits}
\author{Marek Petrik, Guy Tennenholtz, Mohammad Ghavamzadeh}

\begin{document}

\maketitle

%%%%%%%%%%%%%%%%%%%%%%%%%%%%%%%%%%%%%%%%%%%%%%%%%%
%%%%%%%%%%%%%%%%%%%%%%%%%%%%%%%%%%%%%%%%%%%%%%%%%%

\begin{abstract}
We study how to make decisions that minimize Bayesian regret in offline linear bandits. Prior work suggests that one must take actions with maximum \emph{lower confidence bound} (LCB) on their reward. We argue that the reliance on LCB is inherently flawed in this setting and propose a new algorithm that directly minimizes upper bounds on the Bayesian regret using efficient conic optimization solvers. Our bounds build heavily on new connections to monetary risk measures. Proving a matching lower bound, we show that our upper bounds are tight, and by minimizing them we are guaranteed to outperform the LCB approach. Our numerical results on synthetic domains confirm that our approach is superior to LCB. 
\end{abstract}

%%%%%%%%%%%%%%%%%%%%%%%%%%%%%%%%%%%%%%%%%%%%%%%%%%
%%%%%%%%%%%%%%%%%%%%%%%%%%%%%%%%%%%%%%%%%%%%%%%%%%
%%%%%%%%%%%%%%%%%%%%%%%%%%%%%%%%%%%%%%%%%%%%%%%%%%

\section{Introduction}

The problem of offline bandits is an important special case of offline reinforcement learning~(RL) in which the model consists of a single state and involves no state transitions~\cite{Hong2023}. Offline RL, a challenging research problem with a rich history, is inspired by the need to make reliable decisions when learning from a logged dataset~\cite{Lange2012, Rashidinejad2022}. Practical problems from recommendations to search to ranking can be modeled as offline bandits; see, for example, \citet{Hong2023} and references therein. Moreover, gaining a deeper theoretical understanding of offline bandits is a vital stepping stone in understanding the complete offline RL problem. 

We study the problem of minimizing the \emph{Bayesian regret} in the offline linear bandit setting. Bayesian regret differs substantially from its \emph{frequentist} counterpart. While frequentist regret assumes a fixed true model and studies algorithms' response to random datasets, Bayesian regret assumes a fixed dataset and studies algorithms' regret as a function of the true model. When provided with good priors, Bayesian methods offer sufficiently tight bounds to achieve excellent practical results~\cite{Lattimore2018, Gelman2014, Vaart2000}. As such, the strengths of Bayesian methods complement the scalability and simplicity of the frequentist algorithms.

Most prior work on Bayesian offline RL and bandits has adopted a form of pessimism that chooses actions with the highest \emph{lower confidence bounds}~(LCBs). These LCB-style algorithms compute a policy or action with the largest expected return (or reward), penalized by its uncertainty. The uncertainty penalty is computed from credible regions derived from the posterior distribution~\cite{Delage2009, Hong2023, Brown2020, Javed2021, Lobo2023}, and often gives rise to some form of robust optimization~\cite{Behzadian2021, Petrik2019}. LCB-style Bayesian algorithms are generally inspired by the success of this approach in frequentist settings, where LCB is typically computed from concentration inequalities~\cite{Xie2021, Rashidinejad2022, Jin2022, Ghosh2022, Cheng2022}.

In this paper, we propose a Bayesian regret minimization algorithm, called \texttt{BRMOB}, that takes a new approach to Bayesian offline bandits. Instead of adopting an LCB-style strategy, we directly minimize new regret \emph{upper bounds}. To derive these bounds, we reformulate the usual high-confidence objective as a Value-at-Risk~(VaR) of the epistemic uncertainty. Then, we bound the VaR by combining techniques from robust optimization and Chernoff analysis. Our bounds apply to both Gaussian and sub-Gaussian posteriors over the latent reward parameter. \texttt{BRMOB} minimizes the regret bounds efficiently using convex conic solvers. Finally, we also establish a matching lower bound that shows our upper bounds are tight. 

Compared with prior work in Bayesian offline bandits, \texttt{BRMOB} achieves tighter theoretical guarantees and better empirical performance. Two main innovations enable these improvements. First, \texttt{BRMOB} computes randomized policies. Our numerical results show that randomizing among actions results in hedging that can significantly reduce regret compared to deterministic policies. In contrast to \texttt{BRMOB}, most existing algorithms in Bayesian offline bandits~\cite{Hong2023} and Bayesian offline RL~\cite{ Delage2009, Petrik2019, Behzadian2021, Angelotti2021} are restricted to deterministic policies. Second, \texttt{BRMOB} is the only algorithm that explicitly minimizes Bayesian regret bounds. As discussed above, existing algorithms usually maximize the LCB on returns~\cite{Hong2023, Uehara2023}, which does not guarantee to reduce Bayesian regret. Similarly, recent algorithms that maximize the expected return~\cite{Steimle2021a, Su2023} are also not known to reduce Bayesian regret.

We also study the general suitability of LCB algorithms for minimizing Bayesian regret. While \texttt{BRMOB} significantly outperforms a particular LCB algorithm known as \texttt{FlatOPO}~\cite{Hong2023}, the more critical question is whether the general LCB approach is viable for Bayesian regret minimization. Using our new regret lower bounds, we answer this question negatively. More precisely, we show that penalizing reward uncertainty, the core of all LCB algorithms, is guaranteed to increase the algorithm's regret even in very simple problems. This is because actions with high uncertainty may also have a high upside and avoiding them increases regret. Therefore, we believe that explicit regret minimization, as in \texttt{BRMOB}, is a more promising future direction than LCB-style algorithms.

Bayesian regret minimization in offline bandits can also be framed as a chance-constrained optimization problem~\cite{Ben-Tal2009}. The recent chance-constrained optimization literature is mostly focused on constraints in which the function is concave or linear in the uncertain parameter~\cite{Gupta2019, Bertsimas2021}. However, the chance constraint in the Bayesian regret minimization problem is convex, preventing us from using these methods. Another non-concave chance-constrained optimization approach is to resort to scenario-based or sample-based methods~\cite{Calafiore2005, Nemirovski2006, Luedtke2008, Brown2020}. We briefly discuss these methods in \cref{sec:baseline-algorithms}. Such sample-based formulations are general, simple to implement and work well in practice. However, they scale poorly to large problems, provide no theoretical insights, and struggle to compute randomized policies. The closest result to our work is the Bernstein technique for bounding linear chance-constrained programs~\cite{Nemirovski2007, Pinter1989}, which is a special case of one of our bounds.

The paper is organized as follows. After introducing our notations and the popular risk-measure Value-at-Risk (VaR) in \cref{sec:prelim}, we formally define the problem of Bayesian regret minimization in offline bandits and connect it to minimizing VaR in \cref{sec:prel-offl-line}. In \cref{sec:algo}, we derive two new upper bounds on the Bayesian regret and propose our main algorithm, \texttt{BRMOB}, that is based on a simultaneous minimization of these two regret bounds. We also prove a lower bound on the regret that shows our upper bound is tight. In \cref{sec:bayes-regr-analys}, we first derive a regret bound for \texttt{BRMOB} in terms of problem parameters and show that it compares favorably with LCB-based algorithms. We then argue that the general LCB approach is unsuitable for minimizing Bayesian regret. Finally, in \cref{sec:numer-demonstr}, we compare \texttt{BRMOB}'s performance with three baseline algorithms on synthetic domains and show that it is preferable to LCB-style algorithms.

%%%%%%%%%%%%%%%%%%%%%%%%%%%%%%%%%%%%%%%%%%%%%%%%%%
%%%%%%%%%%%%%%%%%%%%%%%%%%%%%%%%%%%%%%%%%%%%%%%%%%
%%%%%%%%%%%%%%%%%%%%%%%%%%%%%%%%%%%%%%%%%%%%%%%%%%
%%%%%%%%%%%%%%%%%%%%%%%%%%%%%%%%%%%%%%%%%%%%%%%%%%
%%%%%%%%%%%%%%%%%%%%%%%%%%%%%%%%%%%%%%%%%%%%%%%%%%

\section{Preliminaries}
\label{sec:prelim}

We begin by defining the notations we use throughout the paper. We use lower and upper case bold letters to denote vectors and matrices, such as $\bm{x}\in \Real^N$ and $\bm{A}\in \Real^{n\times n}$, and normal font for the elements of vectors and matrices, e.g.,~$x_i$. We define the weighted $\ell_2$-norm for any vector $\bm{x}\in\mathbb R^d$ and positive definite matrix $\bm{A}\in\mathbb R^{d\times d}$ as $\|\bm{x}\|_{\bm A}=\sqrt{\bm{x}\tr \bm{A}\bm{x}}$. We denote by $\Delta_k,\forall k\in\mathbb N$ the $k$-dimensional probability simplex, and by $\bm{I}$, $\bm{0}$, $\bm{1}$, and $\bm{1}_a$ the identity matrix, the zero vector, the one vector, and the one-hot vector all with appropriate dimensions. Random variables are adorned with a tilde and are not capitalized. For example, $\bm{\tilde{x}}$ represents a vector-valued random variable. Finally, we denote by $\Omega$ the probability space of a random variable. 

Suppose that $\tilde{x} \colon \Omega\rightarrow \mathbb R$ is a random variable that represents {\em costs}. Then, its \emph{value-at-risk}~(VaR) at a risk-level $\alpha\in [0,1)$ is usually defined as the largest lower bound on its $\alpha$-quantile~(e.g.,~\citealt[definition~4.45, and remark~A.20]{Follmer2016}):
\begin{subequations}
\begin{align}
 \label{eq:var-definition}
  \var{\alpha}{\tilde{x}}
  &= \inf \left\{ t \in \Real \mid \P{\tilde{x} > t} \le 1 - \alpha \right\} \\
  \label{eq:var-definition-sup}
 &= \sup\left\{ t \in \Real \mid \P{\tilde{x} \ge t} > 1 - \alpha\right\}. 
\end{align}
\end{subequations}
The definition of VaR in the literature depends on whether $\tilde{x}$ represents costs or rewards~\cite{Hau2023}. If $\Tilde{x}$ represents \emph{rewards}, maximizing $- \var{\alpha}{-\tilde{x}}$ is equivalent to minimizing $\var{\alpha}{\tilde{x}}$. For Gaussian random variables, $\tilde{x} \sim \mathcal{N}(\mu, \sigma^2)$, VaR has the following analytical form~\cite{Follmer2016}: 
\begin{equation} \label{eq:var-normal}
\var{\alpha}{\tilde{x}} \;=\; \mu + \sigma \cdot z_{\alpha}\,, \end{equation}
where $z_{\alpha}$ is the {\em $\alpha$-quantile} of $\mathcal N(0,1)$. 

%%%%%%%%%%%%%%%%%%%%%%%%%%%%%%%%%%%%%%%%%%%%%%%%%%
%%%%%%%%%%%%%%%%%%%%%%%%%%%%%%%%%%%%%%%%%%%%%%%%%%
%%%%%%%%%%%%%%%%%%%%%%%%%%%%%%%%%%%%%%%%%%%%%%%%%%
%%%%%%%%%%%%%%%%%%%%%%%%%%%%%%%%%%%%%%%%%%%%%%%%%%
%%%%%%%%%%%%%%%%%%%%%%%%%%%%%%%%%%%%%%%%%%%%%%%%%%

\section{Bayesian Offline Bandits}
\label{sec:prel-offl-line}

In this section, we first formally define the problem of Bayesian regret minimization in offline bandits and connect it to monetary risk measures. We then describe two techniques that have been used in solving this problem.  

%In~\cref{sec:problem-definition}, we formulate Bayesian regret and connect it with the framework of monetary risk measures. In~\cref{sec:baseline-algorithms}, we first describe Lower Confidence Bound~(LCB) techniques that have gained popularity in solving offline bandits and related problems. We then describe {\em sampling-based} or {\em scenario-based} optimization methods that have been studied in the context of chance-constrained optimization~\cite{Calafiore2005,Nemirovski2006,Luedtke2008} and can be directly applied to our setting.

%%%%%%%%%%%%%%%%%%%%%%%%%%%%%%%%%%%%%%%%%%%%%%%%%%
%%%%%%%%%%%%%%%%%%%%%%%%%%%%%%%%%%%%%%%%%%%%%%%%%%
%%%%%%%%%%%%%%%%%%%%%%%%%%%%%%%%%%%%%%%%%%%%%%%%%%

\subsection{Problem Definition} 
\label{sec:problem-definition}

Consider a stochastic linear bandit problem with $k\in\mathbb N$ arms (actions) from the set $\mathcal A = \{a_1,\ldots, a_k\}$. Each arm $a\in\mathcal A$ is associated with a $d$-dimensional feature vector $\bm{\phi}_a\in\mathbb R^d$ and its reward distribution has a mean $r(a;\bm{\theta}) = \bm{\phi}_a\tr \bm{\theta}$ for some unknown parameter $\bm{\theta}\in \mathbb R^d$. We define the feature matrix $\bm{\Phi}\in \Real^{d\times k}$ as $\bm{\Phi} = (\bm{\phi}_a)_{a\in \mathcal{A}}$. The goal of the agent is to learn a (possibly {\em randomized}) policy $\bm{\pi}\in\Delta_k$ to choose its actions accordingly. We denote by $\pi_a$ the probability according to which policy $\bm{\pi}$ selects an action $a\in\mathcal A$. The mean reward, or {\em value}, of a policy $\bm{\pi}$ is defined as  
\begin{equation} 
\label{eq:value-optimal}
\begin{aligned}
r(\bm{\pi}; \bm{\theta}) &= \Ex{r(\tilde{a}; \bm{\theta}) \mid \tilde{a} \sim \bm{\pi}} \\ 
&=  \sum_{a\in \mathcal{A}} \pi_a \cdot r(a;\bm{\theta})  
= \bm{\pi}\tr \bm{\Phi}\tr \bm{\theta}\,. 
\end{aligned}
\end{equation}
An optimal policy $\bm{\pi}^\star(\bm{\theta})$ is one that maximizes~\eqref{eq:value-optimal}. 

In the {\em offline} bandit setting, the agent only has access to a {\em logged dataset} $\tilde{\mathcal{D}} = \{(\tilde{a}_i,\tilde{y}_i)\}_{i=1}^n$, and is not capable of interacting further with the environment. Each pair $(\tilde{a}_i,\tilde{y}_i)$ in $\tilde{\mathcal{D}}$ consists of an action $\tilde{a}_i$ selected according to some arbitrary logging policy and a sampled reward $\tilde{y}_i$ from the reward distribution of action $\tilde{a}_i$. We use $D$ to refer to an instantiation of the random dataset $\tilde{\mathcal{D}}$. 

We take the {\em Bayesian} perspective in this paper and model our uncertainty about the reward parameter $\bm{\tilde{\theta}}\colon \Omega \to \Real^d$ by assuming it is a random variable with a known prior $P_{\bm{\tilde{\theta}}}(\bm{\theta})$. Therefore, all quantities that depend on $\bm{\tilde{\theta}}$ are also random. The logged dataset $D$ is used to derive the posterior density $P_{\bm{\tilde{\theta}}|D}(\bm{\theta})$ over the reward parameter. To streamline the notation, we denote by $\bm{\tilde{\theta}}_D:= (\bm{\tilde{\theta}} \mid \tilde{\mathcal{D}} = D)$ the random variable distributed according to this posterior distribution $P_{\bm{\tilde{\theta}}|D}$. We discuss the derivation of the posterior in \cref{sec:bayes-regr-analys}. 

As described above, in the Bayesian offline bandit setting we assume that the logged data $\tilde{\mathcal{D}}$ is fixed to some $D$ and the uncertainty is over the reward parameter $\bm{\tilde{\theta}}$. This is different than the {\em frequentist} offline setting in which the reward parameter is fixed, $\bm{\tilde{\theta}}=\bm{\theta}^\star$, and the randomness is over different datasets generated by the logging policy. 

In the Bayesian offline bandit setting, our goal is to compute a policy $\bm{\pi}\in\Delta_k$ that minimizes the \emph{high-confidence Bayesian regret} $\mathfrak R_\delta \colon \probs{k} \to \Real_+$ defined as
\begin{equation} 
\label{eq:regret-high-confidence def} 
\begin{gathered}
\mathfrak{R}_{\delta}(\bm{\pi}) :=  
\min \; \epsilon \qquad \text{subject to} \\ %\mid  
\mathbb{P}\left[\max_{a\in \mathcal{A}} \; r(a; \bm{\tilde{\theta}}_D) - r(\bm{\pi}; \bm{\tilde{\theta}}_D) \le \epsilon\right] \ge 1-\delta,
\end{gathered}
\end{equation}
where $\delta \in (0,\frac{1}{2})$ is the small error tolerance parameter. We also use $\alpha = 1-\delta$ to denote the confidence in the solution.

Note that~\eqref{eq:regret-high-confidence def} compares the value of a fixed policy $\bm{\pi}$ with the reward of an action (max action) that depends on the posterior random variable $\bm{\tilde{\theta}}_D$. Thus, one cannot expect to achieve a regret of zero. By taking a close look at the definition of regret in~\eqref{eq:regret-high-confidence def} and using the definition of VaR in~\eqref{eq:var-definition}, we may equivalently write our objective in~\eqref{eq:regret-high-confidence def} as
\begin{equation} 
\label{eq:regret-high-confidence-VaR}
\mathfrak{R}_{\delta}(\bm{\pi}) = \var{1-\delta}{ \max_{a\in \mathcal{A}} \; r(a; \bm{\tilde{\theta}}_D) - r(\bm{\pi}; \bm{\tilde{\theta}}_D) }\,.
\end{equation}

One could optimize other objectives besides the high-confidence regret in~\eqref{eq:regret-high-confidence-VaR}. Other objectives, such as maximizing the VaR of the reward, are easier to solve and \cref{sec:other-objectives} discusses them in greater detail.

%%%%%%%%%%%%%%%%%%%%%%%%%%%%%%%%%%%%%%%%%%%%%%%%%%
%%%%%%%%%%%%%%%%%%%%%%%%%%%%%%%%%%%%%%%%%%%%%%%%%%
%%%%%%%%%%%%%%%%%%%%%%%%%%%%%%%%%%%%%%%%%%%%%%%%%%

\subsection{Baseline Algorithms} 
\label{sec:baseline-algorithms}

We now provide a brief description of two methods that have been used to solve Bayesian offline bandits (defined in~\cref{sec:problem-definition}) and closely related problems. 

\paragraph{Lower Confidence Bound (LCB)}
Pessimism to the uncertainty in the problem's parameter is the most common approach in offline decision-making problems, ranging from offline RL~\cite{Uehara2023, Rashidinejad2022, Xie2022c}, to robust RL~\cite{Petrik2019, Behzadian2021, Lobo2020}, and offline bandits~\cite{Hong2023}. In the case of offline bandits, this approach is compellingly simple and is known as maximizing a lower confidence bound, or LCB. The general recipe of the LCB algorithm for Gaussian and sub-Gaussian posteriors $\bm{\tilde{\theta}}_D$ is to simply choose the action $\hat{a}\in \mathcal{A}$ such that
\begin{equation} \label{eq:lcb-hong}
\hat{a}\in\arg \max_{a\in \mathcal{A}} \, \ell_{\beta}(a) := \left(  \bm{\mu}_n\tr \bm{\phi}_a - \beta \cdot \sqrt{\bm{\phi}_a\tr \bm{\Sigma}_n \bm{\phi}_a} \right) ,
\end{equation}
for some $\beta > 0$. The terms $\bm{\mu}_n\tr \bm{\phi}_a$ and $\sqrt{\bm{\phi}_a\tr \bm{\Sigma}_n \bm{\phi}_a}$ represent the posterior mean and standard deviation of $r(a,\bm{\tilde{\theta}}_D)=\bm{\phi}_a\tr\bm{\tilde{\theta}}_D$. The parameter $\beta$ is typically chosen to guarantee that $\ell_{\beta}(a)$ is a high-probability lower bound on the return of action $a \in \mathcal{A}$:
\[
  \P{\ell_{\beta}(a) \le r(a, \bm{\tilde{\theta}}_D)}
  \;\ge\;
  1-\delta.
\]
The \texttt{FlatOPO} algorithm~\cite{Hong2023} is a particular instance of the LCB approach to offline bandits that uses $\beta = \sqrt{5 d \log(\nicefrac{1}{\delta})}$ for Gaussian posteriors. When $\beta = 0$, we refer to an algorithm that implements~\eqref{eq:lcb-hong} as \texttt{Greedy}.

\paragraph{Scenario-based Methods}
Another natural approach to minimizing the Bayesian regret in offline bandits is to treat the optimization in~\eqref{eq:regret-high-confidence def} as a \emph{chance-constrained optimization} problem. The most general algorithm to solve chance-constrained optimization is to use \emph{scenario-based} techniques to minimize the regret $\mathfrak R_{\delta}(\bm{\pi})$~\cite{Calafiore2005, Nemirovski2006, Luedtke2008}. A typical scenario-based algorithm first approximates $\bm{\tilde{\theta}}_D$ with a \emph{discrete} random variable $\bm{\tilde{q}}$ constructed by sampling from its posterior $P_{\bm{\tilde{\theta}}|D}(\bm{\theta})$, and then computes a \emph{deterministic} policy by solving 
\begin{equation}
\label{eq:scenario-based}
 \arg \min_{\hat{a}\in \mathcal{A}} \; \var{1-\delta}{\max_{a\in \mathcal{A}} r(a; \bm{\tilde{q}}) - r(\hat{a}; \bm{\tilde{q}})}\,.
\end{equation}
The optimization in~\eqref{eq:scenario-based} can be solved by enumerating all the actions and computing the VaR of the discrete random variable within the brackets. The important question that has been extensively studied here is the number of samples needed to obtain a solution with high confidence~\cite{Nemirovski2007, Nemirovski2006}. The time complexity of this algorithm is a function of the number of samples and the desired confidence to guarantee a certain suboptimality of the solution; we refer the interested reader to \citet{Nemirovski2007} for a detailed analysis discussion.

Despite the generality and simplicity of scenario-based methods, they have several important drawbacks. They require sampling from the posterior, with a sample complexity that scales poorly with the dimension $d$, number of actions $k$, and particularly confidence level $1-\delta$. They do not provide theoretical guarantees for the regret of the obtained policy and offer no insights into how the regret scales with the parameters of the problem.

Finally, minimizing the regret in~\eqref{eq:regret-high-confidence-VaR} over the space of {\em randomized} policies is challenging using scenario-based methods because it requires solving a mixed-integer linear program~\cite{Lobo2020}. Other ideas have been explored~\cite{Calafiore2005, Brown2020} but a detailed study of such algorithms is beyond our scope.

%%%%%%%%%%%%%%%%%%%%%%%%%%%%%%%%%%%%%%%%%%%%%%%%%% 
%%%%%%%%%%%%%%%%%%%%%%%%%%%%%%%%%%%%%%%%%%%%%%%%%%
%%%%%%%%%%%%%%%%%%%%%%%%%%%%%%%%%%%%%%%%%%%%%%%%%%
%%%%%%%%%%%%%%%%%%%%%%%%%%%%%%%%%%%%%%%%%%%%%%%%%%
%%%%%%%%%%%%%%%%%%%%%%%%%%%%%%%%%%%%%%%%%%%%%%%%%%

\section{Minimizing Analytical Regret Bounds}
\label{sec:algo}

In this section, we propose our new approach for minimizing the Bayesian regret, $\mathfrak R_{\delta}(\bm{\pi})$, defined in~\eqref{eq:regret-high-confidence-VaR}. In particular, we derive two upper bounds on $\mathfrak R_{\delta}(\bm{\pi})$ that complement each other depending on the relative sizes of the feature vector $d$ and action space $k$. We also prove a lower bound on $\mathfrak R_{\delta}(\bm{\pi})$ that shows our upper bound is tight. Finally, we propose our \texttt{BRMOB} algorithm that aims at jointly minimizing our two upper bounds. The proofs of this section are in \cref{sec:algo-proofs}.

%%%%%%%%%%%%%%%%%%%%%%%%%%%%%%%%%%%%%%%%%%%%%%%%%%
%%%%%%%%%%%%%%%%%%%%%%%%%%%%%%%%%%%%%%%%%%%%%%%%%%
%%%%%%%%%%%%%%%%%%%%%%%%%%%%%%%%%%%%%%%%%%%%%%%%%%

\subsection{Bayesian Regret Bounds}
\label{subsec:Bayes-regret-bounds}

To avoid unnecessary complexity, we assume in this section that the posterior distribution over the reward parameter is {\em Gaussian}. We show analogous results for the general {\em sub-Gaussian} case in \cref{sec:sub-gauss-post}. 

\begin{assumption} 
\label{asm:gaussian-posterior}
The posterior over the latent reward parameter is distributed as $\bm{\tilde{\theta}}_D \sim \mathcal{N}(\bm{\mu}, \bm{\Sigma})$, with mean $\bm{\mu}\in \Real^d$ and a \emph{positive definite} covariance matrix $\bm{\Sigma}\in \Real^{d\times d}$. 
\end{assumption}

We begin by showing that under \cref{asm:gaussian-posterior}, the regret $r(a; \bm{\tilde{\theta}}_D) - r(\bm{\pi}; \bm{\tilde{\theta}}_D)$ of any policy $\bm{\pi}\in \probs{k}$ with respect to $a \in \mathcal{A}$ has a Gaussian distribution.

\begin{lemma} \label{lem:regret-normal}
Suppose that $\bm{\tilde{\theta}}_D \sim \mathcal{N}(\bm{\mu}, \bm{\Sigma})$. Then, for any policy $\bm{\pi}\in \probs{k}$, the Bayesian regret in~\eqref{eq:regret-high-confidence-VaR} can be written as
\begin{equation}
\label{eq:var-maximum}
\mathfrak{R}_{\delta}(\bm{\pi}) \;=\; \var{1-\delta}{ \max_{a\in \mathcal{A}} \; \tilde{x}_a^{\bm{\pi}}}\,,
\end{equation}
where $\;\tilde{x}_a^{\bm{\pi}}\sim\mathcal N(\mu_a^{\bm{\pi}},\sigma_a^{\bm{\pi}})\;$ with 
\begin{equation} \label{eq:policy-action-regret}
\mu_a^{\bm{\pi}} = \bm{\mu}\tr \bm{\Phi} (\bm{1}_a - \bm{\pi}), \qquad
\sigma_a^{\bm{\pi}} = \norm{ \bm{\Phi} (\bm{1}_a - \bm{\pi})}_{\bm{\Sigma}}\;.
\end{equation}
\end{lemma}
\Cref{lem:regret-normal} points to the main challenge in deriving tight bounds on $\mathfrak{R}_{\delta}(\bm{\pi})$. Even when $\bm{\tilde{\theta}}_D$ is normally distributed, the random variable $\max_{a\in \mathcal{A}} \; \tilde{x}_a^{\bm{\pi}}$ is unlikely to be Gaussian. The lack of normality prevents us from deriving an \emph{exact} analytical expression for $\mathfrak R_{\delta}(\bm{\pi})$ using~\eqref{eq:var-normal}. In the remainder of the section, we derive two separate techniques for upper bounding the VaR of the maximum of random variables in~\eqref{eq:var-maximum}, thereby also bounding the Bayesian regret $\mathfrak R_{\delta}(\bm{\pi})$.

Our first bound expresses the overall regret as a maximum over individual action regrets. We refer to it as an {\em action-set bound}, because it grows with the size of the action space $k$, and state it in \cref{thm:var-max-combined}.
\begin{theorem} 
\label{thm:var-max-combined}
The regret for any policy $\bm{\pi} \in \probs{k}$ satisfies 
\begin{subequations}\label{eq:var-union-normal}
\begin{align}
\label{eq:var-union-normal-one}  
 \mathfrak{R}_{\delta}(\bm{\pi})
&\le \min_{\bm{\xi}\in \probs{k}} \max_{a\in \mathcal{A}} \, \mu^{\bm{\pi}}_a + \sigma_a^{\bm{\pi}} \cdot z_{1-\delta \xi_a} \\
\label{eq:var-union-normal-two}  
&\le \min_{\bm{\xi}\in \probs{k}} \max_{a\in \mathcal{A}} \, \mu^{\bm{\pi}}_a + \sigma_a^{\bm{\pi}} \cdot \sqrt{2 \log(\nicefrac{1}{\delta\xi_a}) }
\;, 
\end{align}
\end{subequations}
where $z_{1-\delta\xi_a}$ is the $(1-\delta\xi_a)$-th standard normal quantile.
\end{theorem}
%
% \begin{proof}[Proof sketch]
% We decompose the expression in~\eqref{eq:var-maximum} using the \emph{union bound} as
% %
% %\vspace{-0.15in}
% \begin{small}
% \begin{align*}
% \var{1-\delta}{\max_{a \in  \mathcal{A}} \; \tilde{x}_a^{\bm{\pi}}} &= \sup\Big\{ t \in \Real \mid \mathbb{P}\big[\max_{a \in \mathcal{A}} \; \tilde{x}_a^{\bm{\pi}} \ge t\big] > \delta \Big\} \\ 
% &\hspace{-1cm} \le \sup\Big\{ t \in \Real \mid \sum_{a \in \mathcal{A}} \P{ \tilde{x}_a^{\bm{\pi}} \ge t} > \delta\Big\} \\
% &\hspace{-1cm} \le \min_{\bm{\xi}\in \probs{k}}  \sup\Big\{ t \in \Real \mid \P{ \tilde{x}_a^{\bm{\pi}} \ge t} > \delta \xi_a,\,\forall a\in \mathcal{A} \Big\} \\
% &\hspace{-1cm} = \min_{\bm{\xi}\in \probs{k}}  \max_{a\in \mathcal{A}} \,  \var{1-\delta\xi_a}{ \tilde{x}_a^{\bm{\pi}}}.
% \end{align*}
% \end{small}
% \vspace{-0.2in}
% \end{proof}

A special case of~\eqref{eq:var-union-normal} is when $\bm{\xi} = 1/k \cdot \bm{1}$ is uniform, in which case is simplifies to  
\begin{equation} 
\label{eq:var-union-normal-simple}
\mathfrak{R}_{\delta}(\bm{\pi})
\; \le\; 
\max_{a\in \mathcal{A}} \, \mu^{\bm{\pi}}_a + \sigma_a^{\bm{\pi}}
\cdot \sqrt{2 \log(k/\delta)} \;.
\end{equation}
This shows that the action-set bound in \cref{thm:var-max-combined} grows sub-logarithmically with the number of actions $k$. 

%Similar bounding techniques to \cref{thm:var-max-combined} have been previously used in risk-averse and chance-constrained literature. For example, the Bernstein bound in \citet{Nemirovski2007} uses a similar decomposition technique but is applied to EVaR rather than VaR. \citet{Hau2023b} uses a similar technique when constructing a dynamic decomposition of VaR in the context of dynamic programming. 

Because the bound in \cref{thm:var-max-combined} is based on a union bound, the question of its tightness is particularly salient. To address this, we prove a lower bound on the regret when the arms are independent (e.g.,~multi-armed bandits).

\begin{theorem} \label{thm:lower bound}
Suppose that $\bm{\pi} \in \probs{k}$ is a \emph{deterministic} policy such that $\pi_{a_1} = 1$ for $a_1 \in \mathcal{A}$ without loss of generality. When $\mu_{2} = \mu_{3} = \dots = \mu_k$, $\bm{\Sigma}$ is diagonal with $\Sigma_{2,2} = \Sigma_{3,3} = \dots \Sigma_{k,k}$, and $\bm{\Phi} = \bm{I}$, then
  \[
    \mathfrak{R}_{\delta}(\bm{\pi})
    \; \ge\;
    \mu^{\bm{\pi}}_{a_2} + \sigma_{a_2}^{\bm{\pi}} \cdot \kappa_{\mathrm{l}}(k-1),
  \]
where
\[
\kappa_{\mathrm{l}}(k) = -1 + \sqrt{1 - \log (\sqrt{2\pi}) - 2\log \left(1-(1-\delta)^{\nicefrac{1}{k}}\right) }.
\]
\end{theorem}

\begin{figure}
  \centering
  \includegraphics[width=0.4\linewidth]{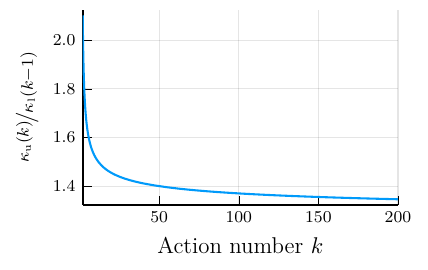}
  \caption{The quotient of the upper bound coefficient $\kappa_{\mathrm{u}}(k)$ and the lower bound coefficient $\kappa_{\mathrm{l}}(k)$.}
  \label{fig:kappa-comparison}
\end{figure}

The lower bound in \cref{thm:lower bound} indicates that \cref{thm:var-max-combined} is tight. For an ease of reference we use $\kappa_{\mathrm{u}}(k) = \sqrt{2 \log(k/\delta)}$ to refer to the coefficient on the RHS of~\eqref{eq:var-union-normal-simple}. The main difference between the upper and lower bounds are the coefficients $\kappa_{\mathrm{u}}(k)$ and $\kappa_{\mathrm{l}}(k-1)$.  One can readily show that $\kappa_{\mathrm{u}}(k) \in O(\kappa_{\mathrm{l}}(k-1))$, since $\nicefrac{\kappa_{\mathrm{u}}(1)}{\kappa_{\mathrm{l}}(1)} < 10$ when $\delta \le \nicefrac{1}{2}$ and $\nicefrac{\kappa_{\mathrm{u}}(k)}{\kappa_{\mathrm{l}}(k-1)}$ is a non-increasing function of $k$. \Cref{fig:kappa-comparison} depicts the quotient of the upper and lower bound coefficients as a function of $k$ for $\delta=0.1$.

We now state our second upper bound on the regret in \cref{thm:var-max-l2}. We refer to it as the {\em parameter-space bound} because it grows with the dimension $d$ of the parameter. 

\begin{theorem}%[parameter-space bound]
\label{thm:var-max-l2}
The regret for any policy $\bm{\pi} \in \probs{k}$ satisfies
\begin{subequations} \label{eq:var-max-l2-eq}
\begin{align}
\label{eq:var-max-l2-eq-one}
\mathfrak{R}_{\delta}(\bm{\pi})
    %\;\le\; \max_{\bm{\theta}\in \mathcal{P}_{\delta}} \; \max_{a\in \mathcal{A}} \; \big( r(a; \bm{\theta}) - r(\bm{\pi}; \bm{\theta}) \big) \\
&\le \max_{a\in \mathcal{A}} \, \mu^{\bm{\pi}}_a+ \sigma^{\bm{\pi}}_a \cdot \sqrt{\chi^2_d(1-\delta)} \\
\label{eq:var-max-l2-eq-two}
&\le \max_{a\in \mathcal{A}} \, \mu^{\bm{\pi}}_a+ \sigma^{\bm{\pi}}_a \cdot 5d \log(1/\delta) 
\,, 
\end{align}
\end{subequations}
where $\chi^2_d(1-\delta)$ is the $(1-\delta)$-th quantile of the $\chi^2$ distribution with $d$ degrees of freedom.
\end{theorem}
% \begin{proof}[Proof sketch]
% We derive the bound in~\eqref{eq:var-max-l2-eq} using the robust representation of VaR~\cite{Ben-Tal2009}. We first construct the set $\mathcal{P}_{\delta} \subseteq \Real^d$ as  
% %
% \begin{equation} 
% \label{eq:robust-set}
% \mathcal{P}_\delta \;=\;  \left\{ \bm{\theta} \in \Real^d \mid \| \bm{\theta} - \bm{\mu} \|^{2}_{\bm{\Sigma}^{-1}} \le \chi^2_d(1-\delta) \right\} \,.
% \end{equation}
% %
% Because $\| \bm{\theta} - \bm{\mu} \|^{2}_{\Sigma^{-1}}$ is a sum of $d$ independent squared random variables, it is distributed according to the $\chi^2$ distribution with $d$ degrees of freedom. We then show that $\mathcal{P}_{\delta}$ is a $(1-\delta)$-credible region for the random variable $\bm{\tilde{\theta}}_D$ and use this together with the inequality $\var{\alpha}{\tilde{x}} \le \max \left\{ x\in \mathcal{X} \right\}$, when $\P{\tilde{x} \in \mathcal{X}} \ge \alpha$ for any set $\mathcal{X}\subseteq \mathbb R$, to upper bound the regret.
% \end{proof}

Note that a growing body of literature argues that using credible regions in constructing robust approximations of VaR is overly conservative when used with linear or concave functions~\cite{Gupta2019, Bertsimas2021, Petrik2019}. However, these results do not apply to our setting because the maximum in~\eqref{eq:var-maximum} is \emph{non-concave}. 

To compare our two upper bounds in~\eqref{eq:var-union-normal} and~\eqref{eq:var-max-l2-eq}, it is sufficient to compare the terms $z_{1-\delta \xi_a}$ and $\sqrt{\chi^2_d(1-\delta)}$. From \cref{thm:var-max-combined,thm:var-max-l2}, we can conclude that the second upper bound is preferable when $d < \log k$.

%%%%%%%%%%%%%%%%%%%%%%%%%%%%%%%%%%%%%%%%%%%%%%%%%%
%%%%%%%%%%%%%%%%%%%%%%%%%%%%%%%%%%%%%%%%%%%%%%%%%%
%%%%%%%%%%%%%%%%%%%%%%%%%%%%%%%%%%%%%%%%%%%%%%%%%%

\subsection{Optimization Algorithm}
\label{subsec:opt-algo}

\begin{algorithm*}[ht!]
\caption{\texttt{BRMOB}: Bayesian Regret Minimization for Offline Bandits}
\label{alg:BRMOB}
\KwIn{Posterior parameters $\bm{\mu}$ and $\bm{\Sigma}$, $\;\;\;$ risk tolerance $\delta\in (0,\nicefrac{1}{2})$, $\;\;\;$ feature matrix $\bm{\Phi}\in\mathbb R^{d\times k}$, $\;\;\;$ \# of iterations $m$} 
Initialize $\;\nu^0_a \gets \min \left\{  \sqrt{\chi^2_d(1-\delta)}, z_{1-\nicefrac{\delta}{k}} \right\},\;\forall a \in \mathcal{A}\;$; $\quad$ $\;\; i \gets 0\;$\;
Minimize regret bounds: Let $\rho^i$ and $\bm{\pi}^i$ be the optimizers of
\label{line:min-regret}
\begin{equation} 
\label{eq:main-optimization}
\operatorname*{minimize}_{\bm{\pi}, \; \bm{s}\in \Real^k_+, \; \rho\in \Real}
\rho \quad \operatorname{subject\;to} \quad \bm{1}\tr \bm{\pi} = 1, \quad 
\rho \ge \mu^{\bm{\pi}}_a + s_a\cdot \nu_a^i,\quad  s_a^2 \ge (\sigma^{\bm{\pi}}_a)^2,\,\; \forall a\in \mathcal{A} .
\end{equation} \\
\For{$\;i = 1, \ldots , m\;$}{
Tighten regret bounds: Let $\bm{\xi}^i$ be an optimizer of
\begin{equation}
\label{eq:finetuning}
\operatorname*{minimize}_{\bm{\xi}, \, \bm{s}\in \Real^k_+, \, \bm{l} \in \Real^k, \; \rho\in \Real} \rho \;\;\; \operatorname{subject\;to} \;\;\; \bm{1}\tr \bm{\xi} = \delta, \;\;\; \rho \ge \mu^{\bm{\pi}^{i-1}}_a \!\!+ \sigma^{\bm{\pi}^{i-1}}_a \cdot s_a, \;\;\; s_a^2 \ge -2 l_a, \;\;\; l_a \le \log \xi_a, \; \forall a\in \mathcal{A} .
\end{equation}\\
Set $\nu_a^i \gets  z_{1-\delta\xi^i_a}, \; \forall a\in \mathcal{A}$ \;
Solve~\eqref{eq:main-optimization} and let $\rho^i$ and $\bm{\pi}^i$ be its optimizers \;
}
\label{line:best-policy}
$i\opt \gets \arg\min_{i = 0, \dots, m} \; \rho^i$  \label{Line-Opt-Regret}
\tcp*{Choose policy with the best regret guarantee}
\Return{ {\em randomized policy} $\bm{\pi}^{i\opt}$, {\em regret upper bound} $\rho^{i\opt}$} \;
\end{algorithm*}

We now describe our main algorithm, \emph{Bayesian Regret Minimization for Offline Bandits}~(\texttt{BRMOB}), whose pseudo-code is reported in \cref{alg:BRMOB}. Before describing \texttt{BRMOB} in greater detail, it is important to note that it returns a \emph{randomized} policy. Unlike in online bandits, here the goal of randomization among the actions is not to explore, but rather to reduce the risk of incurring high regret. The numerical results in \cref{sec:numer-demonstr} show that the ability to randomize over actions significantly reduces Bayesian regret in many situations. 

\texttt{BRMOB}'s strategy is to compute a policy with the \emph{minimum} regret guarantee. In \cref{line:min-regret}, it computes a policy $\bm{\pi}^0$ that simultaneously minimizes our two proposed upper bounds: the one in \cref{thm:var-max-combined} with a uniform $\bm{\xi}$ as given in~\eqref{eq:var-union-normal-simple}, and the one in \cref{thm:var-max-l2} as given in~\eqref{eq:var-max-l2-eq}. The bounds can be optimized jointly because they differ only in constant $\nu$. The optimization in~\eqref{eq:main-optimization} is a second-order conic program (SOCP), because $\bm{\nu}\ge \bm{0}$ and can be solved very efficiently~\cite{mosek2022, Lubin2023}. The actual time complexity depends on the particular SOCP solver used, but most interior-point algorithms run in $O(k^6)$ complexity or faster~\cite{Kitahara2018}.

After completing \cref{line:min-regret}, \texttt{BRMOB} proceeds with $m$ iterations of tightening the regret bound and improving the policy. In each iteration $i$, it tightens the regret bound in \cref{thm:var-max-combined} by optimizing $\bm{\xi}^i$ in~\eqref{eq:finetuning} for the incumbent policy $\bm{\pi}^{i-1}$. The minimum in~\eqref{eq:finetuning} can be computed efficiently using exponential and second-order cones~\cite{mosek2022, Lubin2023}. Exponential conic optimization is hypothesized to be polynomial time, but this fact has not been established yet to the best of our knowledge. The algorithm then minimizes the tightened bound by solving~\eqref{eq:main-optimization} and obtains an improved policy $\bm{\pi}^i$.

The tightening steps in \cref{alg:BRMOB} can be seen as a coordinate descent procedure for joint minimization of $\bm{\pi}$ and $\bm{\xi}$ in~\eqref{eq:var-union-normal}. It would be preferable to minimize the bound simultaneously over $\bm{\pi}$ and $\bm{\xi}$, but such optimization appears to be intractable.

Finally, \cref{alg:BRMOB} returns a policy in the set $\{\bm{\pi}^i\}_{i=0}^m$ with the smallest regret bound $\rho^i$ in \cref{line:best-policy}. Although $\rho^i$ will be generally non-increasing with an increasing $i$, this is not guaranteed. This is because the tightening step in~\eqref{eq:finetuning} minimizes the bounds in~\eqref{eq:var-union-normal-two} and~\eqref{eq:var-max-l2-eq-two}. These bounds are generally looser than the bounds in~\eqref{eq:var-union-normal-one} and~\eqref{eq:var-max-l2-eq-one} optimized by~\eqref{eq:main-optimization}.

We provide a worst-case error bound on the regret of \texttt{BRMOB} in \cref{sec:bayes-regr-analys}. Our regret bound holds for any number of tightening steps, including $m=0$. We focus on bounds that are independent of $m$ for the sake of simplicity, since the improvements that arise from the tightening procedure can be difficult to quantify cleanly.

We conclude this section with the following result that shows \texttt{BRMOB} indeed minimizes the regret upper bounds in \cref{thm:var-max-combined,thm:var-max-l2} as intended.
\begin{proposition} \label{cor:algo-is-great}
Suppose that \texttt{BRMOB} returns a policy $\bm{\hat{\pi}}\in \probs{k}$ and a regret bound $\hat{\rho}$. Then
\begin{equation}
  \mathfrak{R}_{\delta}(\bm{\hat\pi})
  \; \le \; 
 \hat\rho
  \; \le\; 
  \min_{\bm{\pi}\in \probs{k}} \max_{a\in \mathcal{A}} \, \mu_a^{\bm{\pi}}(n) +  \sigma^{\bm{\pi}}_a(n) \cdot  \eta \;,
\end{equation}
where $\;\eta = \min \left\{\sqrt{2 \log (k/\delta) } , \sqrt{5 d \log(1/\delta) } \right\}\;$. 
\end{proposition}

%%%%%%%%%%%%%%%%%%%%%%%%%%%%%%%%%%%%%%%%%%%%%%%%%%
%%%%%%%%%%%%%%%%%%%%%%%%%%%%%%%%%%%%%%%%%%%%%%%%%%
%%%%%%%%%%%%%%%%%%%%%%%%%%%%%%%%%%%%%%%%%%%%%%%%%%
%%%%%%%%%%%%%%%%%%%%%%%%%%%%%%%%%%%%%%%%%%%%%%%%%%
%%%%%%%%%%%%%%%%%%%%%%%%%%%%%%%%%%%%%%%%%%%%%%%%%%

\section{Regret Analysis}
\label{sec:bayes-regr-analys}

In this section, we derive a regret bound for \texttt{BRMOB} and compare it with that of \texttt{FlatOPO}~\cite{Hong2023}, an LCB-style algorithm. We use a frequentist analysis to bound the Bayesian regret of \texttt{BRMOB} as a function of $k$, $d$, number of samples $n$, and coverage of the dataset $D$. \Cref{sec:comparison-with-lcb} concludes by arguing that the general LCB approach in~\eqref{eq:lcb-hong} is unsuitable for minimizing Bayesian regret; see \cref{sec:other-objectives} for other objectives that can be optimized using LCB-style algorithms. Our lower bound shows that LCB can match \texttt{BRMOB}'s regret only if the confidence penalty $\beta$ is very small and \emph{decreases} with $k$ and $d$. All the proofs of this section are reported in \Cref{sec:theory-proofs}.

%%%%%%%%%%%%%%%%%%%%%%%%%%%%%%%%%%%%%%%%%%%%%%%%%%
%%%%%%%%%%%%%%%%%%%%%%%%%%%%%%%%%%%%%%%%%%%%%%%%%%
%%%%%%%%%%%%%%%%%%%%%%%%%%%%%%%%%%%%%%%%%%%%%%%%%%

\subsection{Sample-Based Regret Bound} 
\label{sec:sample-based-regret}

As in prior work~\cite{Hong2023}, we assume a Gaussian prior distribution over the reward parameter $P_{\bm{\tilde{\theta}}} = \mathcal{N}(\bm{\mu}_0, \bm{\Sigma}_0)$ with an invertible $\bm{\Sigma}_0$, and Gaussian rewards $\tilde{y} \sim \mathcal{N}\big(r(a;\bm{\tilde\theta})=\bm{\phi}_a\tr \bm{\tilde\theta}, \bar{\sigma}^2\big)$ for each action $a\in\mathcal A$. As a result, the posterior distribution over the parameter given a dataset $D = \{(a_i,y_i)\}_{i=1}^n$ is also Gaussian $\bm{\tilde{\theta}}_D \sim \mathcal{N}(\bm{\mu}_n, \bm{\Sigma}_n)$ with 
\begin{equation}
\label{eq:guassian-posterior}
\begin{aligned} 
\bm{\Sigma}_n &= ( \bm{\Sigma}_0^{-1} + \bar{\sigma}^{-2} \bm{G}_n )^{-1}, \\
\bm{\mu}_n &= \bm{\Sigma}_n (\bm{\Sigma}_0^{-1}\bm{\mu}_0 + \bar{\sigma}^{-2} \bm{B}_n \bm{y}_n),
\end{aligned}
\end{equation}
and where  $\bm{B}_n = (\phi_{a_i})_{i=1}^n$ is the matrix with observed features in its columns, $\bm{y}_n = (y_i)_{i=1}^n$ is the vector of observed rewards, and $\bm{G}_n = \bm{B}_n\tr \bm{B}_n$ is the empirical covariance matrix (see Bayesian linear regression for example in \citealt{Rasmussen2006, Deisenroth2021}).

To express the regret bound as a function of the dataset $D$, we make the following standard quality assumption. % about its quality~\cite{Hong2023}:
\begin{assumption} \label{asm:good-data}
The feature vectors satisfy $\|\bm{\phi}_a\|_2 \le 1, \forall a\in \mathcal{A}$, and there exists a $\gamma > 0$ such that
\begin{equation*}
    \bm{G}_n \; \succeq\;  \gamma n \cdot \bm{\phi}_a \bm{\phi}_a\tr , \qquad \forall a\in \mathcal{A}, \;\forall n \ge 1\,.
\end{equation*}
\end{assumption}

Intuitively, \cref{asm:good-data} states that the dataset provides sufficient information such that the norm of the covariance matrix $\bm{\Sigma}_n$ of the posterior distribution over $\bm{\tilde{\theta}}_D$ decreases linearly with $n$. From a frequentist perspective, this assumption holds with high probability by the Bernstein-Von-Mises theorem under mild conditions~\cite{Vaart2000}.

We are now ready to bound the Bayesian regret of \texttt{BRMOB}. We state the bound for the general case and then tighten it when $\bm{\mu}_n = \bm{0}$ (only the variance of actions matters).

\begin{theorem}%[\texttt{BRMOB} regret bound] 
\label{thm:regret-bound}
Suppose that the parameter has a Gaussian posterior $\bm{\tilde{\theta}}_D \sim \mathcal{N}(\bm{\mu}_n, \bm{\Sigma}_n)$ and \texttt{BRMOB} returns a policy $\bm{\hat \pi}$. Then, the regret of \texttt{BRMOB} is bounded as $\mathfrak{R}_{\delta}(\bm{\hat \pi})
\le 2\eta$, where
\begin{equation} \label{eq:regret-bound}
   \eta =  \sqrt{ \frac{\min \left\{ 2 \log (\nicefrac{k}{\delta}), 5 d \log(\nicefrac{1}{\delta})  \right\} }
{\lambda_{\max}(\bm{\Sigma}_0)^{-1}  + \gamma n\bar{\sigma}^{-2}} }\;.
\end{equation}
Moreover, if $\bm{\mu}_n = 0$ then $ \mathfrak{R}_{\delta}(\bm{\hat{\pi}}) \le 2 \left(1- \max_{a'\in \mathcal{A}} \hat{\pi}_{a'}\right) \eta$ with $\max_{a'\in \mathcal{A}} \hat{\pi}_{a'} \ge \nicefrac{1}{d+1}\;$.
\end{theorem}

\subsection{Comparison with FlatOPO}
\label{sec:comp-flat-opo}

We now compare the regret bound of \texttt{BRMOB} with that of \texttt{FlatOPO}~\cite{Hong2023}, an LCB-based algorithm for regret minimization in Bayesian offline bandits. As discussed in \cref{sec:baseline-algorithms}, using the LCB principle is the most common approach to regret minimization in offline decision-making. \citet{Hong2023} derived the following regret bound for \texttt{FlatOPO} under \cref{asm:good-data}: 
\begin{equation}  \label{eq:flatopo-regret}
  \mathfrak{R}_{\delta}(\bm{\hat\pi}) \;\le\;
  2\sqrt{ \frac{5 d^2 \log(\nicefrac{1}{\delta})}
    {\lambda_{\max}(\bm{\Sigma}_0)^{-1} + \gamma n\bar{\sigma}^{-2}} }\;,
\end{equation}
where $\bm{\hat\pi}$ is a \emph{deterministic} policy returned by the algorithm. 

Comparing our regret bound for \texttt{BRMOB} in~\eqref{eq:regret-bound} with \texttt{FlatOPO}'s in~\eqref{eq:flatopo-regret}, we notice two main improvements. The first one is that the \texttt{BRMOB}'s regret is bounded by $\sqrt{\log k}$. Thus, when the number of actions $k$ satisfies $k \ll \exp(d)$, the regret guarantee of \texttt{BRMOB} can be dramatically lower than that achieved by \texttt{FlatOPO}. It is unclear how one could extend the existing analysis in \citet{Hong2023} to bound its regret in terms of $k$. Its design and analysis rely on a robust set, which is difficult to restrict using $k$.

The second improvement is that the regret bound of \texttt{BRMOB} grows $\sqrt{d}$ slower than \texttt{FlatOPO}'s, which is a significant reduction in regret. This improvement is probably a consequence of our tighter analysis rather than better algorithmic design. The analysis in \citet{Hong2023} uses a general upper bound on the trace of a rank one matrix, which introduces an unnecessary $\sqrt{d}$ term. Yet, applying our techniques to \texttt{FlatOPO} yields additional constant terms missing in~\eqref{eq:flatopo-regret}.

%%%%%%%%%%%%%%%%%%%%%%%%%%%%%%%%%%%%%%%%%%%%%%%%%%
%%%%%%%%%%%%%%%%%%%%%%%%%%%%%%%%%%%%%%%%%%%%%%%%%%
%%%%%%%%%%%%%%%%%%%%%%%%%%%%%%%%%%%%%%%%%%%%%%%%%%

\subsection{Limitation of LCB}
\label{sec:comparison-with-lcb}

In this section, we argue that the popular LCB approach is inherently unsuitable for minimizing Bayesian \emph{regret} in offline bandits. As we discussed in \cref{sec:comp-flat-opo}, \texttt{BRMOB} achieves significantly better regret guarantees than \texttt{FlatOPO}. Our numerical results in \cref{sec:numer-demonstr} also show that \texttt{BRMOB} outperforms \texttt{FLatOPO}. However, these results are obtained for a particular value of $\beta$ in~\eqref{eq:lcb-hong}. Our theoretical analysis suggests that even a simple \texttt{Greedy} algorithm, which uses $\beta = 0$ in~\eqref{eq:lcb-hong}, can significantly outperform LCB. The intuition behind the LCB approach is that one should prefer actions with low uncertainty, and thus, limited downside. This intuition is correct when the goal is to maximize the VaR of \emph{reward} as shown in \cref{sec:optim-bayes-regr}. However, this intuition does not apply when the objective is \emph{regret} minimization. In fact, actions with low uncertainty also have limited upside and high regret, and thus, as we show, penalizing high variance actions is counterproductive. 

We now construct a simple class of offline bandit problems to illustrate LCB's limitations. For this class of problems, we show that the \emph{lower bound} on the regret of LCB can be far greater than the upper bound on \texttt{BRMOB}, or even \texttt{Greedy}, policy. In what follows, we assume that LCB computes the high-confidence lower bound as in~\eqref{eq:lcb-hong} for some value of $\beta$. %$\beta\colon \mathbb{N}\to \Real$. 

\begin{example} \label{exm:lcb-counterexample}
Consider a class of offline bandit problems parametrized with the $\beta$ used in~\eqref{eq:lcb-hong}. The bandit has $k \ge 2$ arms, feature dimension $d=k$, and a feature matrix $\bm{\Phi} = \bm{I}$. Suppose that the posterior covariance over the reward parameter $\bm{\Sigma}\in \Real^{k \times k}$ is diagonal with the diagonal elements $\sigma_1 = 0$ and $\sigma_2 = \dots = \sigma_k$, and the posterior mean has the following form: $\mu_1 = 0$ and $\mu_2 = \dots =\mu_k = \beta \cdot  \sigma_2 \ge 0$. 
\end{example}

The intuition underlying the bandit problems in \cref{exm:lcb-counterexample} is as follows. It has one action, $a_1$, with low reward and low variance. The other $k-1$ arms are i.i.d.~with higher mean and variance. LCB prefers to take action $a_1$ because of its low variance and forgoing the higher mean of the other actions. The next theorem shows that taking any of the other actions with a higher mean, as would be chosen by \texttt{BRMOB}, or even \texttt{Greedy} that selects an action with the largest posterior mean, leads to a far lower regret.

\begin{theorem} \label{thm:counterexample}
Consider the bandit problems in \cref{exm:lcb-counterexample} and assume a realization of LCB with a coefficient $\beta > 0$ that breaks ties by choosing an $a_i$ with the smallest $i$. Then, LCB returns $\bm{\pi}_{\mathrm{LCB}}\in \probs{k}$ with $\bm{\pi}_{\mathrm{LCB}}(a_1) = 1$ and 
  \begin{equation} \label{eq:lcb-lower bound}
    \mathfrak{R}_{\delta}(\bm{\pi}_{\mathrm{LCB}})
    \; \ge \;
    (\beta + \kappa_{\mathrm{l}}(k)) \cdot \sigma_{a_2}.
\end{equation}
Moreover, \texttt{Greedy} with the same tie-breaks will return a policy $\bm{\pi}_{\mathrm{G}} \in \probs{k}$ with $\bm{\pi}_{\mathrm{G}}(a_2) = 1$ and
  \begin{equation} \label{eq:greedy-upper bound}
    \mathfrak{R}_{\delta}(\bm{\pi}_{\mathrm{G}})
    \; \le \;
    \sqrt{2} \cdot  \sigma_{a_2} \cdot \kappa_{\mathrm{u}}(k) .
\end{equation}
Finally, \texttt{BRMOB}'s regret also satisfies the bound in~\eqref{eq:greedy-upper bound}.
\end{theorem}

\begin{figure}
  \centering
  \includegraphics[width=0.4\linewidth]{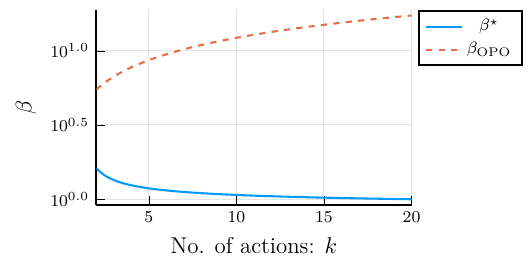}
  \caption{The value of $\beta$ used by \texttt{FlatOPO} in \cref{exm:lcb-counterexample}, $\beta_{\mathrm{OPO}}$, and the upper bound $\beta\opt$ that may avoid the under-performance of LCB, defined in~\eqref{eq:beta-values}, as functions of the number of actions $k$. 
  }
  \label{fig:beta-comparison}
\end{figure}

\Cref{thm:counterexample} shows that even in a simple class of problems, \texttt{Greedy} (or \texttt{BRMOB}) computes a policy that outperforms LCB significantly. The increase in regret of LCB versus \texttt{Greedy} (or \texttt{BRMOB}) can be bounded from below as
\begin{equation}
\label{eq:diff-regret}
\!\mathfrak{R}_{\delta}(\bm{\pi}_{\mathrm{LCB}}) - \mathfrak{R}_{\delta}(\bm{\pi}_{\mathrm{G}}) \! \ge \! \left(\beta \!+\! \kappa_{\mathrm{l}}(k) \!-\! \sqrt{2} \kappa_{\mathrm{u}}(k) \right) \sigma_{a_2}.
\end{equation}
Note that the bound, when positive, can be made arbitrarily large by scaling $\sigma_{a_2}$.

\begin{figure*}
  \centering
  \includegraphics[width=0.3\linewidth]{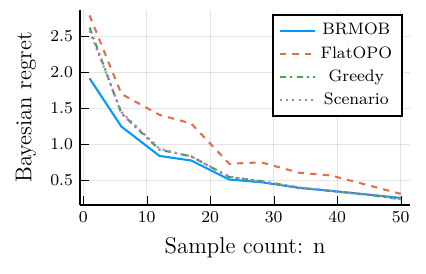}
  \hspace{0.02\linewidth}
  \includegraphics[width=0.3\linewidth]{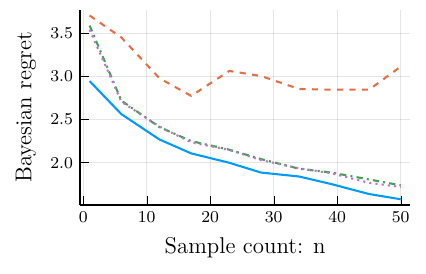}
  \hspace{0.02\linewidth}
  \includegraphics[width=0.3\linewidth]{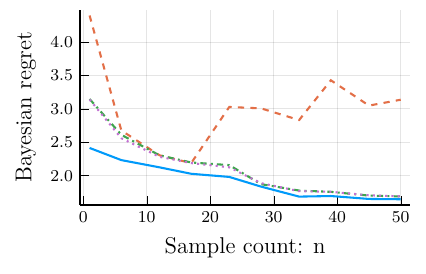}
  \caption{ Bayesian regret with $k=d=5$ {\em (left)}, $k=d=50$ {\em (middle and right)}. The prior mean is $\bm{\mu}_0 = \bm{0}$ {\em (left and middle)} and $(\bm{\mu}_0)_a = \sqrt{a}\;$ for $a = 1, \dots  50$ \emph{(right)}.}
\label{fig:independent-actions}
\end{figure*}

\begin{figure*}
  \centering
  \includegraphics[width=0.3\linewidth]{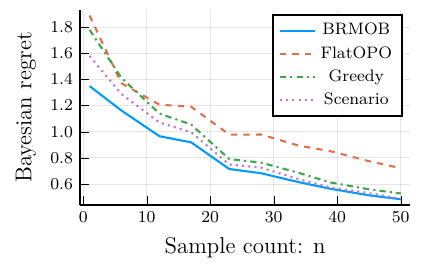}
  \hspace{0.02\linewidth}
  \includegraphics[width=0.3\linewidth]{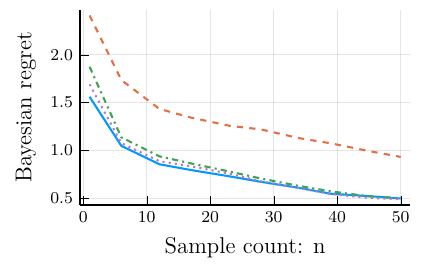}
  \hspace{0.02\linewidth}
  \includegraphics[width=0.3\linewidth]{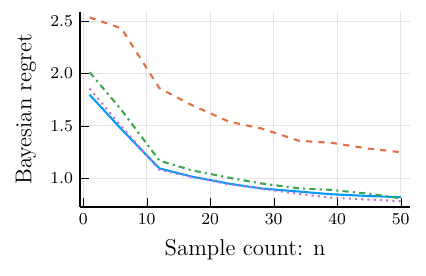}
  \caption{ Bayesian regret with $d=4$ and $k=10$ {\em (left)}, $k=50$ {\em (middle)}, and $k=100$ {\em (right)}.}
\label{fig:dependent-actions}
\end{figure*}

Using algebraic manipulation of the bound in~\eqref{eq:diff-regret}, we can show that $\beta$ should satisfy the following condition for LCB to perform better than \texttt{Greedy} and \texttt{BRMOB}: 
\begin{equation}
\label{eq:beta-values}
\beta \leq \beta\opt = \frac{\sqrt{2} \kappa_{\mathrm{u}}(k)}{\kappa_{\mathrm{l}}(k-1)} - 1.    
\end{equation}
The inequality in~\eqref{eq:beta-values} indicates that penalizing an action's uncertainty with a $\beta$ greater than $\beta\opt$ \emph{increases} the regret of LCB. 
For comparison, the $\beta$ used by \texttt{FlatOPO} for the class of problems in \cref{exm:lcb-counterexample} in which $d=k$ is $\beta_{\mathrm{OPO}} = \sqrt{5 k \log(\nicefrac{1}{\delta})}$. \Cref{fig:beta-comparison} shows that for $\delta = 0.1$, $\beta_{\mathrm{OPO}}$ exceeds $\beta\opt$, and thus, violates the condition in~\eqref{eq:beta-values} and performs worse than \texttt{Greedy} and \texttt{BRMOB} for all values of $k$. 

%%%%%%%%%%%%%%%%%%%%%%%%%%%%%%%%%%%%%%%%%%%%%%%%%%
%%%%%%%%%%%%%%%%%%%%%%%%%%%%%%%%%%%%%%%%%%%%%%%%%%
%%%%%%%%%%%%%%%%%%%%%%%%%%%%%%%%%%%%%%%%%%%%%%%%%%
%%%%%%%%%%%%%%%%%%%%%%%%%%%%%%%%%%%%%%%%%%%%%%%%%%
%%%%%%%%%%%%%%%%%%%%%%%%%%%%%%%%%%%%%%%%%%%%%%%%%%

\section{Numerical Results}
\label{sec:numer-demonstr}

In this section, we compare \texttt{BRMOB} to several baselines on synthetic domains. Here, we evaluate the basic version of \texttt{BRMOB} with $m=0$ iterations and defer results that demonstrate the improvement from the tightening step to \cref{sec:addit-exper-results}. Particularly, we compare it to (i) \texttt{FlatOPO}~\cite{Hong2023} that is based on the LCB principle, (ii) \texttt{Greedy} method which selects an action $a$ with the largest value of $\mu_a$, and finally, (iii) \texttt{Scenario}, the scenario-based method described in~\eqref{eq:scenario-based} in \cref{sec:baseline-algorithms}. We execute \texttt{Scenario} with 4000 samples from the posterior. Increasing this number did not improve our results.

Our experiments use synthetic domains, each defined by a normal prior $(\bm{\mu}_0, \bm{I})$ and a feature matrix $\bm{\Phi}$. To evaluate the Bayesian regret as a function of data size $n$, we first sample a single large dataset and then vary the number of data points $n$ from this set used to estimate the posterior distributions. The regret for each policy is computed by a scenario-based algorithm that samples from the posterior and computes the empirical VaR. We use the error tolerance of $\delta = 0.1$ throughout. Results are averaged over $100$ runs of this process to reduce variance. As confidence intervals were negligible for all algorithms except \texttt{FlatOPO}, to avoid clutter, we do not plot them here and refer the reader to \cref{sec:addit-exper-results} for additional details. 

We evaluate the algorithms on three domains. The first one uses $k=d$ actions, an identity feature matrix $\bm{\Phi} = \bm{I}$, and zero prior mean $\bm{\mu}_0 = \bm{0}$. The second one is the same, except $(\bm{\mu}_0)_a = \sqrt{a}$ to simulate varying rewards for actions. Finally, the third one fixes dimension $d = 4$ and varies $k$ while using randomly generated features from the $\ell_{\infty}$-ball.

\Cref{fig:independent-actions,fig:dependent-actions} summarize our numerical results. They consistently show across all domains that \texttt{BRMOB} significantly outperforms all the other algorithms, particularly when the posterior uncertainty is large. The only challenging setting for \texttt{BRMOB} is when $k \gg d$. Note that \texttt{FlatOPO}'s performance is noisy in \Cref{fig:independent-actions} because its $\beta$ coefficient grows fast with $d$. A common practice is to tune $\beta$, but we did not find any value of $\beta$ for which \texttt{FlatOPO} performs better than \texttt{Greedy}, which is consistent with our theoretical analysis in \cref{sec:comparison-with-lcb}. It is also notable that \texttt{Greedy} outperforms LCB significantly, furnishing further evidence that  LCB is unsuitable for minimizing regret. We provide additional results, including confidence bounds and runtime, in \cref{sec:addit-exper-results}.

%%%%%%%%%%%%%%%%%%%%%%%%%%%%%%%%%%%%%%%%%%%%%%%%%%
%%%%%%%%%%%%%%%%%%%%%%%%%%%%%%%%%%%%%%%%%%%%%%%%%%
%%%%%%%%%%%%%%%%%%%%%%%%%%%%%%%%%%%%%%%%%%%%%%%%%%
%%%%%%%%%%%%%%%%%%%%%%%%%%%%%%%%%%%%%%%%%%%%%%%%%%
%%%%%%%%%%%%%%%%%%%%%%%%%%%%%%%%%%%%%%%%%%%%%%%%%%

\section{Conclusion}
\label{sec:disc-conc}

We proposed \texttt{BRMOB}, a new approach for Bayesian regret minimization in offline bandits, that is based on jointly minimizing two analytical upper bounds on the Bayesian regret. We proved a regret bound for \texttt{BRMOB} and showed that it compares favorably with an existing LCB-style algorithm \texttt{FlatOPO}~\cite{Hong2023}. Finally, we showed theoretically and empirically that the popular LCB approach is unsuitable for minimizing Bayesian regret.

Our approach can be extended to several more general settings. The algorithm and bounds can generalize to sub-Gaussian posterior distributions as described in \cref{sec:sub-gauss-post}. The algorithm can also be extended to {\em contextual} linear bandits by computing a separate policy $\pi$ for each context individually or by assuming a random context. Another important future direction is understanding the implications of our results to frequentist settings where similar concerns about the value of the LCB approach have been raised~\cite{Xie2022c, Xiao2021b}. 

%%%%%%%%%%%%%%%%%%%%%%%%%%%%%%%%%%%%%%%%%%%%%%%%%% 
%%%%%%%%%%%%%%%%%%%%%%%%%%%%%%%%%%%%%%%%%%%%%%%%%%
%%%%%%%%%%%%%%%%%%%%%%%%%%%%%%%%%%%%%%%%%%%%%%%%%%
%%%%%%%%%%%%%%%%%%%%%%%%%%%%%%%%%%%%%%%%%%%%%%%%%%

%\subsubsection*{Acknowledgements}
%All acknowledgments go at the end of the paper, including thanks to reviewers who gave useful comments, to colleagues who contributed to the ideas, and to funding agencies that provided financial support. 
%To preserve the anonymity, please include acknowledgments \emph{only} in the camera-ready papers.

%%%%%%%%%%%%%%%%%%%%%%%%%%%%%%%%%%%%%%%%%%%%%%%%%%
%%%%%%%%%%%%%%%%%%%%%%%%%%%%%%%%%%%%%%%%%%%%%%%%%%
%%%%%%%%%%%%%%%%%%%%%%%%%%%%%%%%%%%%%%%%%%%%%%%%%%
%%%%%%%%%%%%%%%%%%%%%%%%%%%%%%%%%%%%%%%%%%%%%%%%%%

\subsection*{Acknowledgments}

We thank the anonymous referees for their helpful comments and suggestions. The work in the paper was supported, in part, by NSF grants 2144601 and 2218063. In addition, this work was performed in part while Mohammad Ghavamzadeh and Marek Petrik were at Google Research.

\subsection*{Impact Statement}

This paper aims to advance the theory field of Machine Learning. There are many potential societal consequences of our work, but it is difficult to speculate what they may be. Of course, we sincerely hope that the impacts will only be positive. 

%\newpage
\bibliographystyle{icml2024}
\bibliography{offlinerl}

%%%%%%%%%%%%%%%%%%%%%%%%%%%%%%%%%%%%%%%%%%%%%%%%%%
%%%%%%%%%%%%%%%%%%%%%%%%%%%%%%%%%%%%%%%%%%%%%%%%%%
%%%%%%%%%%%%%%%%%%%%%%%%%%%%%%%%%%%%%%%%%%%%%%%%%%
%%%%%%%%%%%%%%%%%%%%%%%%%%%%%%%%%%%%%%%%%%%%%%%%%%
%%%%%%%%%%%%%%%%%%%%%%%%%%%%%%%%%%%%%%%%%%%%%%%%%%

\clearpage
\onecolumn
\appendix

%%%%%%%%%%%%%%%%%%%%%%%%%%%%%%%%%%%%%%%%%%%%%%%%%%
%%%%%%%%%%%%%%%%%%%%%%%%%%%%%%%%%%%%%%%%%%%%%%%%%%
%%%%%%%%%%%%%%%%%%%%%%%%%%%%%%%%%%%%%%%%%%%%%%%%%%
%%%%%%%%%%%%%%%%%%%%%%%%%%%%%%%%%%%%%%%%%%%%%%%%%%
%%%%%%%%%%%%%%%%%%%%%%%%%%%%%%%%%%%%%%%%%%%%%%%%%%

\section{Technical Background and Lemmas}
\label{app:background}

A scalar random variable $\tilde{x}\colon \Omega \to \Real$ with mean $\mu = \Ex{\tilde{x}}$ is sub-Gaussian with a variance factor $\sigma^2 \geq 0$ when
\begin{equation} 
\label{eq:scalar-rv-subgaussian}
\E \big[\exp\big(\lambda (\tilde{x} - \mu)\big)\big]
\; \le\;
\exp\big(\lambda^2 \sigma^2/2\big), \qquad  \forall \lambda \in  \Real \,.
\end{equation}
A {\em multivariate} random variable $\bm{\tilde{x}}\colon \Omega \to \Real^d$ with a mean $\bm{\mu} = \Ex{\bm{\tilde{x}}}$ is \emph{sub-Gaussian} with a covariance factor $\bm{\Sigma}\in \Real^{d\times d}$ when~\citep{vershynin2010introduction,jin2019short}
\begin{equation} 
\label{eq:multi-rv-subgaussian}
  \E \big[\exp\big(\lambda \bm{w}\tr(\bm{\tilde{x}} - \bm{\mu})\big)\big]
  \; \le\;
  \exp\big(\lambda^2 \bm{w}\tr \bm{\Sigma} \bm{w} / 2\big),
  \qquad \forall \lambda \in \Real, \; \forall \bm{w}\in \probs{d}\,.
\end{equation}

The Entropic Value at Risk~(EVaR) is a risk measure related to VaR, defined as~\cite{Ahmadi-Javid2012}
\begin{equation}\label{eq:evar-def-app}
  \evar{\alpha}{\tilde{x}}
  \; =\;
  \inf_{\beta>0} \, \beta^{-1} \big(\E[\exp(\beta\tilde{x})] - \log(1-\alpha)\big) \,, \qquad \forall \alpha \in [0,1).
\end{equation}

The following lemma shows that EVaR is an upper bound on VaR. This is a property that will be useful in our proofs later on.
\begin{lemma}
\label{lem:var-evar}
For any random variable $\tilde{x}\colon \Omega \to \Real $, we have that
  \[
    \var{\alpha}{\tilde{x}}
    \; \le\;
    \evar{\alpha}{\tilde{x}}, \qquad \forall \alpha \in [0,1).
  \]
\end{lemma}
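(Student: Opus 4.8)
The plan is to prove the bound through a standard Chernoff-style (exponential Markov) argument: for each fixed $\beta > 0$ I will show that the quantity appearing inside the EVaR infimum is feasible for the infimum defining $\var{\alpha}{\tilde{x}}$, and then pass to the infimum over $\beta$. Throughout I work with the log-moment-generating form of EVaR in \eqref{eq:evar-def-app}, namely $\evar{\alpha}{\tilde{x}} = \inf_{\beta>0}\beta^{-1}\bigl(\log \E[\exp(\beta\tilde{x})] - \log(1-\alpha)\bigr)$.

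Concretely, fix $\beta > 0$ and set $t_\beta = \beta^{-1}\bigl(\log \E[\exp(\beta\tilde{x})] - \log(1-\alpha)\bigr)$, the value over which the EVaR infimum is taken. The first step is to apply Markov's inequality to the nonnegative random variable $\exp(\beta\tilde{x})$, which gives $\P{\tilde{x} \ge t_\beta} = \P{\exp(\beta\tilde{x}) \ge \exp(\beta t_\beta)} \le \E[\exp(\beta\tilde{x})]\,/\,\exp(\beta t_\beta)$. The second step is a direct computation using the choice of $t_\beta$: since $\exp(\beta t_\beta) = \E[\exp(\beta\tilde{x})]\,/\,(1-\alpha)$, the right-hand side of the Markov bound simplifies exactly to $1-\alpha$. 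Hence $\P{\tilde{x} > t_\beta} \le \P{\tilde{x} \ge t_\beta} \le 1-\alpha$, and by the infimum form of the VaR definition in \eqref{eq:var-definition} this shows that $t_\beta$ is feasible, so $\var{\alpha}{\tilde{x}} \le t_\beta$.

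The final step is to take the infimum over $\beta > 0$ on both sides. Since $\var{\alpha}{\tilde{x}} \le t_\beta$ holds for \emph{every} $\beta > 0$, the inequality is preserved under the infimum, yielding $\var{\alpha}{\tilde{x}} \le \inf_{\beta>0} t_\beta = \evar{\alpha}{\tilde{x}}$ as claimed. There is no substantial obstacle in this argument; it is the classical exponential-Markov bound. The only points requiring minor care are handling the strict-versus-nonstrict inequality in the VaR definition, which is harmless because $\P{\tilde{x} > t_\beta} \le \P{\tilde{x} \ge t_\beta}$, and the degenerate case in which $\E[\exp(\beta\tilde{x})] = +\infty$ for some $\beta$, where the corresponding $t_\beta = +\infty$ makes the bound vacuous and thus leaves the infimum unaffected.
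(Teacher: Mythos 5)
Your proof is correct, but it takes a genuinely different route from the paper. The paper disposes of this lemma by citation: it invokes Proposition~3.2 of \citet{Ahmadi-Javid2012}, which gives $\cvaro_{\alpha} \le \evaro_{\alpha}$, and then chains this with the standard fact that CVaR dominates VaR. You instead give a direct, self-contained Chernoff-type argument: Markov's inequality applied to $\exp(\beta\tilde{x})$ shows that each candidate value $t_\beta = \beta^{-1}\bigl(\log\E[\exp(\beta\tilde{x})] - \log(1-\alpha)\bigr)$ satisfies $\P{\tilde{x} > t_\beta} \le 1-\alpha$ and is therefore feasible for the infimum defining $\var{\alpha}{\tilde{x}}$, after which taking the infimum over $\beta$ finishes the proof. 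Your approach buys a fully elementary proof that bypasses CVaR and the external reference entirely, and your handling of the strict-versus-nonstrict inequality and of the case $\E[\exp(\beta\tilde{x})]=+\infty$ is careful and correct; the paper's approach buys brevity and places EVaR within the familiar hierarchy $\varo_\alpha \le \cvaro_\alpha \le \evaro_\alpha$, which the authors also reuse implicitly elsewhere. One remark: your argument hinges on the logarithm in the definition $\evar{\alpha}{\tilde{x}} = \inf_{\beta>0}\beta^{-1}\bigl(\log\E[\exp(\beta\tilde{x})] - \log(1-\alpha)\bigr)$, whereas the paper's display~\eqref{eq:evar-def-app} omits that $\log$; this is evidently a typo in the paper (the proof of \cref{lem:evar-subgaussian} silently uses the logarithmic form), and you were right to work with the corrected definition, since without the $\log$ the identity $\exp(\beta t_\beta) = \E[\exp(\beta\tilde{x})]/(1-\alpha)$ would fail.
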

\begin{proof}
This is a consequence of Proposition~3.2 in \citet{Ahmadi-Javid2012} and the fact that CVaR upper bounds VaR. 
\end{proof}

Similar to~\eqref{eq:var-normal} for VaR, we can show that for Gaussian random variables, $\tilde{x}  \sim \mathcal{N}(\mu, \sigma^2)$, EVaR has the following analytical form~\cite{Ahmadi-Javid2012}: 
\begin{equation} \label{eq:evar-normal}
\evar{\alpha}{\tilde{x}} \;=\; \mu + \sigma \cdot \sqrt{-2 \log (1-\alpha)}.
\end{equation}
One advantage of EVaR over VaR is that we can bound it in the more general case of sub-Gaussian random variables by the same bound as for a normal random variable in~\eqref{eq:evar-normal} (see the following lemma).
\begin{lemma} 
\label{lem:evar-subgaussian}
Let $\tilde{x}\colon \Omega \to \Real$ be a sub-Gaussian random variable defined according to~\eqref{eq:scalar-rv-subgaussian}. Then, we have
\[
 \evar{\alpha}{\tilde{x}} \;\le\; \mu + \sigma \cdot \sqrt{-2 \log (1-\alpha)}\,, \qquad \forall \alpha \in [0,1)\;. 
\]
\end{lemma}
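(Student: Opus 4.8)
The plan is to exploit the fact that the sub-Gaussian condition~\eqref{eq:scalar-rv-subgaussian} furnishes \emph{exactly} the same moment-generating-function bound as a genuine Gaussian of the same mean and variance factor, so that the EVaR bound collapses onto the Gaussian formula~\eqref{eq:evar-normal}. First I would multiply~\eqref{eq:scalar-rv-subgaussian} through by $\exp(\beta\mu)$ to get, for every $\beta > 0$,
\[
  \E\big[\exp(\beta \tilde{x})\big] \;\le\; \exp\!\big(\beta\mu + \tfrac12 \beta^2 \sigma^2\big),
\]
which is precisely the MGF of $\mathcal{N}(\mu,\sigma^2)$; taking logarithms yields $\log \E[\exp(\beta\tilde{x})] \le \beta\mu + \tfrac12\beta^2\sigma^2$.

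Next I would insert this into the definition of EVaR in~\eqref{eq:evar-def-app}. Since $\beta^{-1} > 0$, for each fixed $\beta$ the bracketed quantity is monotone increasing in $\log\E[\exp(\beta\tilde{x})]$, so replacing that term by its upper bound can only increase the expression, giving
\[
  \evar{\alpha}{\tilde{x}} \;\le\; \inf_{\beta>0} \Big\{ \mu + \tfrac12 \beta\sigma^2 - \beta^{-1}\log(1-\alpha) \Big\}.
\]
Writing $c := -\log(1-\alpha) \ge 0$, the objective is $g(\beta) = \mu + \tfrac12\beta\sigma^2 + c\,\beta^{-1}$, a strictly convex function on $(0,\infty)$ whose unique stationary point is $\beta^\star = \sqrt{2c}/\sigma$. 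Substituting back (the two variable terms each equal $\tfrac12\sigma\sqrt{2c}$) gives $g(\beta^\star) = \mu + \sigma\sqrt{2c} = \mu + \sigma\sqrt{-2\log(1-\alpha)}$, which is the claimed bound.

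I expect no serious obstacle; the only points requiring care are that the minimizer $\beta^\star$ genuinely lies in the open feasible region $(0,\infty)$, which holds whenever $\alpha > 0$, and the degenerate case $\alpha = 0$ (so $c = 0$), where the infimum is approached as $\beta \to 0^+$ and equals $\mu$, matching the right-hand side. Equivalently and more conceptually, one may note that the Gaussian MGF is attained with equality by $\tilde{x}\sim\mathcal{N}(\mu,\sigma^2)$, so the sub-Gaussian expression is dominated term-by-term inside the infimum by the Gaussian one; the result is then immediate from the exact Gaussian EVaR formula~\eqref{eq:evar-normal}, bypassing the explicit minimization entirely.
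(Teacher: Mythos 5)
Your argument is correct and is essentially the paper's own proof: both bound the (log) moment-generating function via the sub-Gaussian condition, substitute into the EVaR infimum, and optimize over $\beta$ to find $\beta^\star = \sigma^{-1}\sqrt{-2\log(1-\alpha)}$ (the paper merely extracts $\mu$ first via translation invariance of EVaR rather than folding it into the MGF bound). Your attention to the boundary case $\alpha=0$ is a small additional nicety not present in the paper.
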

\begin{proof}
From the translation invariance of EVaR~\citep[Theorem~3.1]{Ahmadi-Javid2012} and the definitions in~\eqref{eq:scalar-rv-subgaussian} and~\eqref{eq:evar-def-app}, we have 
\begin{align*}
\evar{\alpha}{\tilde{x}} &= \mu + \evar{\alpha}{\tilde{x} - \mu}
= \mu + \inf_{\beta>0} \, \beta^{-1} \cdot \Big(\Ex{\exp\big(\beta \cdot (\tilde{x} - \mu) \big)} - \log(1-\alpha)\Big) \\
&\le \mu + \inf_{\beta>0} \, \beta^{-1} \cdot \left(\frac{\beta^2 \sigma^2}{2}  - \log(1-\alpha)\right) = \mu + \sigma \cdot \sqrt{-2 \log (1-\alpha)}\,.
\end{align*}
The last step follows by solving for the optimal $\beta\opt = \sigma^{-1} \sqrt{-2 \log (1-\alpha)}$ from the first-order optimality conditions of the convex objective function.
\end{proof}

%%%%%%%%%%%%%%%%%%%%%%%%%%%%%%%%%%%%%%%%%%%%%%
%%%%%%%%%%%%%%%%%%%%%%%%%%%%%%%%%%%%%%%%%%%%%%
%%%%%%%%%%%%%%%%%%%%%%%%%%%%%%%%%%%%%%%%%%%%%%
%%%%%%%%%%%%%%%%%%%%%%%%%%%%%%%%%%%%%%%%%%%%%%
%%%%%%%%%%%%%%%%%%%%%%%%%%%%%%%%%%%%%%%%%%%%%%

\section{Proofs of Section \ref{sec:algo}}
\label{sec:algo-proofs}

% We begin by showing that the Bayesian regret is equal to the VaR of a maximum of $|\mathcal A|$ Gaussian random variables. 
% \begin{lemma} \label{lem:regret-normal}
% The regret in~\eqref{eq:regret-high-confidence-VaR} for any $\bm{\pi}\in \probs{k}$ equals to
% %
% \begin{equation}
% \label{eq:var-maximum}
% \mathfrak{R}_{\delta}(\bm{\pi}) 
% \;=\; 
% \var{1-\delta}{ \max_{a\in \mathcal{A}} \; \tilde{x}_a^{\bm{\pi}} }\,,
% \end{equation}
% %
% where
% \[
%   \bm{\tilde{x}}^{\bm{\pi}} \;=\;  \left( \bm{I} -  \bm{1} \bm{\pi}\tr \right) \bm{\Phi}\tr  \bm{\tilde{\theta}}_D \sim  \mathcal{N}\big(\bm{F}_{\bm{\pi}}\tr \bm{\mu} ,  \bm{F}_{\bm{\pi}}\tr \bm{\Sigma} \bm{F}_{\bm{\pi}} \big),
% \]
% and $\bm{F}_{\bm{\pi}} = \bm{\Phi}(\bm{I} - \bm{\pi}\bm{1}\tr)$.
% \end{lemma}
% %
% \Cref{lem:regret-normal} indicates the main challenge in deriving tight bounds on $\mathfrak{R}_{\delta}(\bm{\pi})$. Even when $\bm{\tilde{\theta}}_D$ is normally distributed, the random variable $\max_{a\in \mathcal{A}} \tilde{x}_a^{\bm{\pi}}$ is unlikely to be Gaussian. The lack of normality prevents us from deriving an \emph{exact} analytical expression for $\mathfrak R_{\delta}(\bm{\pi})$ using~\eqref{eq:var-normal}.
\begin{proof}[Proof of \cref{lem:regret-normal}]
We obtain by algebraic manipulation that
\begin{align*}
\max_{a\in \mathcal{A}} \; r(a; \bm{\tilde{\theta}}_D) - r(\bm{\pi}; \bm{\tilde{\theta}}_D)
  &= \max_{a\in \mathcal{A}} \; \bm{1}_a\tr \bm{\Phi}\tr \bm{\tilde{\theta}}_D - \bm{\pi}\tr \bm{\Phi}\tr  \bm{\tilde{\theta}}_D
  = \max_{a\in \mathcal{A}} \; \bm{1}_a\tr \left( \bm{\Phi}\tr \bm{\tilde{\theta}}_D -  \bm{1} \bm{\pi}\tr \bm{\Phi}\tr  \bm{\tilde{\theta}}_D \right) \\
   &=\max_{a\in \mathcal{A}} \; \bm{1}_a\tr \left( \bm{I} -  \bm{1} \bm{\pi}\tr \right) \bm{\Phi}\tr  \bm{\tilde{\theta}}_D \,. 
\end{align*}
Let $\bm{\tilde{x}}^{\bm{\pi}} = \left( \bm{I} -  \bm{1} \bm{\pi}\tr \right) \bm{\Phi}\tr  \bm{\tilde{\theta}}_D$, which is a linear transformation of the normal random variable $\bm{\tilde{\theta}}_D$. The result follows because linear transformations preserve normality~\cite{Deisenroth2021}.
\end{proof}

\subsection{Proof of \cref{thm:var-max-combined}}
We first report a result in the following lemma which we later use to prove \cref{thm:var-max-combined}.

\begin{lemma} \label{lem:var-max-sum-better}
Suppose $\bm{\tilde{x}}\colon \Omega \to \Real^k$ is a random variable such that all $\alpha$-quantiles, $\forall \alpha \in [0,1)$, for each $\tilde{x}_a, a\in \mathcal{A}$ are unique. Then, the following inequality holds for each $\alpha\in [0,1)$:
 \[
   \var{\alpha}{\max_{a\in \mathcal{A}} \tilde{x}_a}
   \;\le\; 
   \inf \left\{ \max_{a\in \mathcal{A}} \var{1-\xi_a}{\tilde{x}_a} \mid
     \bm{\xi}\in \Real^k_+, \, \bm{1}\tr \bm{\xi} = 1-\alpha \right\}\,.
 \]
We interpret the maximum of all $-\infty$ as $-\infty$.
\end{lemma}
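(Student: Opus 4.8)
The plan is to work directly from the second (supremum) form of VaR in \eqref{eq:var-definition}, writing $\var{\alpha}{\max_{a\in\mathcal{A}} \tilde{x}_a} = \sup\{t\in\Real \mid \P{\max_{a\in\mathcal{A}} \tilde{x}_a \ge t} > 1-\alpha\}$, and then controlling the tail of the maximum by a union bound. Since $\P{\max_{a\in\mathcal{A}} \tilde{x}_a \ge t} \le \sum_{a\in\mathcal{A}} \P{\tilde{x}_a \ge t}$ for every $t$, the condition ``$>1-\alpha$'' becomes easier to satisfy when the tail of the maximum is replaced by the summed tail, so the set of feasible thresholds can only grow and we obtain $\var{\alpha}{\max_{a} \tilde{x}_a} \le \sup\{t \mid \sum_{a} \P{\tilde{x}_a \ge t} > 1-\alpha\}$. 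This reduces the problem to bounding the supremum associated with the summed marginal tails, which no longer refers to the (non-normal) maximum.

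Next I would fix an arbitrary feasible weight vector, i.e. $\bm{\xi}\in\Real^k_+$ with $\bm{1}\tr\bm{\xi} = 1-\alpha$, and run a pigeonhole argument against the budget $\sum_a \xi_a = 1-\alpha$. For any $t$ with $\sum_{a} \P{\tilde{x}_a \ge t} > 1-\alpha = \sum_{a}\xi_a$, at least one index $a$ must satisfy $\P{\tilde{x}_a \ge t} > \xi_a$; for that index, $t$ lies in the set defining $\var{1-\xi_a}{\tilde{x}_a} = \sup\{s \mid \P{\tilde{x}_a \ge s} > \xi_a\}$, whence $t \le \var{1-\xi_a}{\tilde{x}_a} \le \max_{a'\in\mathcal{A}} \var{1-\xi_{a'}}{\tilde{x}_{a'}}$. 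Taking the supremum over all such $t$ yields $\sup\{t \mid \sum_a \P{\tilde{x}_a\ge t} > 1-\alpha\} \le \max_a \var{1-\xi_a}{\tilde{x}_a}$, and since $\bm{\xi}$ was an arbitrary feasible vector, taking the infimum over $\bm{\xi}$ closes the argument.

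The main obstacle is not the algebra — the union bound and the pigeonhole step are elementary — but the measure-theoretic bookkeeping at the quantile boundaries. I would invoke the uniqueness-of-quantiles hypothesis precisely to neutralize this: it rules out atoms at the critical thresholds, so that the two equivalent expressions for VaR in \eqref{eq:var-definition} agree at the relevant points and I may pass freely between the strict ($>$) and non-strict ($\ge$) tail inequalities without degrading the bound. The stated ``$-\infty$'' convention is then needed to cover degenerate coordinates whose marginal quantile is $-\infty$ (for instance when some $\xi_a$ is close to one and $\tilde{x}_a$ is essentially unbounded below), so that $\max_a \var{1-\xi_a}{\tilde{x}_a}$ stays well defined. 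A final point worth checking is that the constraint is the equality $\bm{1}\tr\bm{\xi} = 1-\alpha$ rather than an inequality, since the pigeonhole step relies on the weights exhausting exactly the budget $1-\alpha$.
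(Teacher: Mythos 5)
Your argument is correct and follows essentially the same route as the paper's own proof: a union bound on the tail of the maximum, followed by a pigeonhole decomposition of the budget $1-\alpha$ across the coordinates and an infimum over the feasible weights $\bm{\xi}$. One small refinement in your favor: by carrying the strict inequality $\sum_{a}\P{\tilde{x}_a \ge t} > 1-\alpha$ all the way through, your pigeonhole step produces an index with $\P{\tilde{x}_a \ge t} > \xi_a$ and hence lands directly in the set $\{s \mid \P{\tilde{x}_a \ge s} > \xi_a\}$ whose supremum is $\var{1-\xi_a}{\tilde{x}_a}$, so the uniqueness-of-quantiles hypothesis (which the paper needs only because it relaxes to non-strict inequalities early and must reconcile the upper and lower quantiles at the end) is never actually used in your chain, and your worry about the quantile boundaries is unnecessary.
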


\begin{proof}
The result develops as
\begin{align*}
\var{\alpha}{\max_{a \in  \mathcal{A}} \tilde{x}_a}
  &\overset{\text{(a)}}{=}  \sup\left\{ t \in \Real \mid \P{\max_{a \in \mathcal{A}} \tilde{x}_a \ge t} > 1 - \alpha\right\} \\
  &\overset{\text{(b)}}{\le}  \sup\left\{ t \in \Real \mid \P{\max_{a \in \mathcal{A}} \tilde{x}_a \ge t} \ge 1 - \alpha\right\} \\
  &\overset{\text{(c)}}{\le}  \sup\left\{ t \in \Real \mid \sum_{a \in \mathcal{A}} \P{ \tilde{x}_a \ge t} \ge  1 - \alpha \right\} \\
&\overset{\text{(d)}}{=}  \inf   \left\{ \sup\left\{ t \in \Real \mid \sum_{a \in \mathcal{A}} \P{ \tilde{x}_a \ge t} \ge \sum_{a\in \mathcal{A}} \xi_a \right\} \mid  \bm{\xi}\in \Real_+^k, \bm{1}\tr \bm{\xi} = 1 - \alpha \right\} \\
&\overset{\text{(e)}}{\le}  \inf   \left\{ \max_{a\in \mathcal{A}} \sup\left\{ t \in \Real \mid \P{ \tilde{x}_a \ge t} \ge \xi_a \right\} \mid  \bm{\xi}\in \Real_+^k, \bm{1}\tr \bm{\xi} = 1 - \alpha \right\} \\
&\overset{\text{(f)}}{\le}  \inf   \left\{ \max_{a\in \mathcal{A}} \var{1- \xi_a}{\tilde{x}_a} \mid  \bm{\xi}\in \Real_{+}^k, \bm{1}\tr \bm{\xi} = 1 - \alpha \right\} .
\end{align*}
{\em (a)} is from the definition of VaR. {\em (b)} follows by relaxing the set by replacing the strict inequality with a non-strict one. {\em (c)} follows by relaxing the constraint further using the union bound. {\em (d)} follows from algebraic manipulation because the objective is constant in the choice of $\bm{\xi}$. {\em (e)} holds by relaxing the sum constraints and then representing the supremum over a union of sets by a maximum of the suprema of the sets as
\begin{align*}
  \sup\left\{ t \in \Real \mid \sum_{a \in \mathcal{A}} \P{ \tilde{x}_a \ge t} \ge \sum_{a\in \mathcal{A}} \xi_a \right\}
  &\le
  \sup\left\{ t \in \Real \mid \P{ \tilde{x}_a \ge t} \ge \xi_a,\, \exists a\in \mathcal{A} \right\} \\
  &=
    \max_{a\in \mathcal{A}} \sup\left\{ t \in \Real \mid \P{ \tilde{x}_a \ge t} \ge \xi_a \right\}\,.
\end{align*}
Finally, {\em (f)} follows from the definition of VaR and because then the quantiles are unique~\cite{Follmer2016}
\[
  \var{1- \xi_a}{\tilde{x}_a} = \sup\left\{ t \in \Real \mid \P{ \tilde{x}_a \ge t} \ge \xi_a \right\} = \sup\left\{ t \in \Real \mid \P{ \tilde{x}_a \ge t} > \xi_a \right\}\,.
\]
The first equality is the definition of the upper quantile $q^+$ and the second equality is the definition of the lower quantile $q^-$, which are equal by the uniqueness assumption. 
\end{proof}

We are now ready to prove \cref{thm:var-max-combined}.
\begin{proof}[Proof of \cref{thm:var-max-combined}]
The first inequality in~\eqref{eq:var-union-normal} follows from \cref{lem:regret-normal} and \cref{lem:var-max-sum-better} by some algebraic manipulation. The second inequality in~\eqref{eq:var-union-normal} follows from upper bounding the VaR of a Gaussian random variable using~\eqref{eq:var-normal} and the fact that $\tilde{x}^{\bm{\pi}}_a$ is a Gaussian random variable with mean $\bm{\mu}\tr \bm{\Phi}(\bm{1}_a - \bm{\pi})$ and standard deviation $\|\bm{\Phi}(\bm{1}_a - \bm{\pi})\|_{\bm{\Sigma}}$.

The inequality $z_{1-\delta\xi_a} \le \sqrt{2 \log \nicefrac{1}{\delta\xi_a} }$ holds because for a standard normal random variable $\tilde{y}$, we have that
  \[
    z_{1-\delta\xi_a} = \var{1-\delta\xi_a}{\tilde{y}}
    \overset{\text{(a)}}{\le} \evar{1-\delta\xi_a}{\tilde{y}}
    \overset{\text{(b)}}{=} \sqrt{2 \log(1/\delta\xi_a)}\,.
  \]
{\em (a)} follows from \cref{lem:var-evar} and {\em (b)} is by~\eqref{eq:evar-normal}.
\end{proof}

%%%%%%%%%%%%%%%%%%%%%%%%%%%%%%%%%%%%%%%%%%%%%%
%%%%%%%%%%%%%%%%%%%%%%%%%%%%%%%%%%%%%%%%%%%%%%
%%%%%%%%%%%%%%%%%%%%%%%%%%%%%%%%%%%%%%%%%%%%%%
\subsection{Proof of \cref{thm:lower bound}}

First, we prove a lower bound on the VaR of a single Gaussian random variable. 
\begin{lemma} \label{lem:var-single-lower}
Suppose that $\tilde{x} \sim \mathcal{N}(0,1)$ and $\alpha \ge \frac{1}{2}$. Then
\[
    \var{\alpha}{\tilde{x}} 
    \; \ge\;
    -1 + \sqrt{1 - \log (\sqrt{2\pi}) - 2\log (1-\alpha) }.
  \]
\end{lemma}
\begin{proof}
To establish this lower bound on VaR, we use the known bounds on the cumulative distribution function of a Gaussian random variable as stated, for example, in eq. (13.1) in \citet{Lattimore2018}. For any $t\in \Real$ we have that
\[
  \P{\tilde{x} \ge t}
  \;\ge\; 
  \frac{\sqrt{8\pi^{-1}}}{2|t| + \sqrt{4 t^2 + 16}}
  \exp \left( - \frac{t^2}{2} \right).
\]
From the definition of VaR in~\eqref{eq:var-definition-sup} we get that
\begin{align*}
\var{\alpha}{\tilde{x}}
  &= \sup\left\{ t \in \Real \mid \P{\tilde{x} \ge t} > 1 - \alpha\right\} \\
  &= \sup\left\{ t \in \Real_+ \mid \P{\tilde{x} \ge t} > 1 - \alpha\right\} \\
  &\ge \sup\left\{ t \in \Real_+ \mid   \frac{\sqrt{8\pi^{-1}}}{2 t + \sqrt{4 t^2 + 16}}
  \exp \left( - \frac{t^2}{2} \right) > 1 - \alpha\right\} \\
  &\ge \sup\left\{ t \in \Real_+ \mid   \frac{\sqrt{8\pi^{-1}}}{4(t+1)}
  \exp \left( - \frac{t^2}{2} \right) > 1 - \alpha\right\}.
\end{align*}
Here, we restricted $t$ to be non-negative, which does not impact the VaR value because for $\alpha \ge  0.5$ we have that $\var{\alpha}{\tilde{x}} \ge 0$. The first inequality is a lower bound that follows by tightening the feasible set in the supremum. The final inequality follows since $\sqrt{4t^2 + 16} \le 2t + 4$ from the triangle inequality.

Then, algebraic manipulation of the right-hand side above gives us that
\begin{equation*}
\var{\alpha}{\tilde{x}}
  \ge \sup \left\{ t\in \Real_+ \mid -t^2 - 2t > 1 \log (1-\alpha) + 2\log \sqrt{2\pi} \right\}.
\end{equation*}
Then, using the fact that the constraint is concave in $t$, we get the final lower bound on VaR by solving the quadratic equation.
\end{proof}

The following lemma bounds the VaR of a maximum of independent random variables. This is possible because the maximum is the first \emph{order statistic} which has an easy-to-represent CDF~\cite{David2003}.

\begin{lemma} \label{lem:var-multi-variable}
  Suppose that $\tilde{x}_i\colon \Omega \to \Real, i = 1, \dots , n$ are i.i.d. random variables. Then
  \[
    \var{\alpha}{\max_{i =1, \dots , n} \tilde{x}_i}
    \;=\; 
    \var{\alpha^{\nicefrac{1}{n}}}{\tilde{x}_1}. 
  \]
\end{lemma}
\begin{proof}
 Recall i.i.d. random variables satisfy that
  \[
    \P{\max_{i=1, \dots , n} \tilde{x}_i}
    = \prod_{i=1, \dots , n} \P{\tilde{x}_i}
    = \P{\tilde{x}_1}^n.
  \]
  The result then follows from the definition of VaR in~\eqref{eq:var-definition} and from algebraic manipulation as
  \begin{align*}
    \var{\alpha}{\max_{i=1, \dots , n} \tilde{x}_i}
    &= \inf \left\{ t \in \Real \mid \P{\max_{i=1, \dots , n} \tilde{x}_i > t} \le 1 - \alpha \right\} 
    % &= \inf \left\{ t \in \Real \mid 1- \P{\max_{i=1, \dots , n} \tilde{x}_i \le t} \le 1 - \alpha \right\} \\
    = \inf \left\{ t \in \Real \mid \P{\max_{i=1, \dots , n} \tilde{x}_i  \le t} \ge \alpha \right\} \\
    &= \inf \left\{ t \in \Real \mid \P{\tilde{x}_1 \le t}^n \ge \alpha \right\} 
    = \inf \left\{ t \in \Real \mid \P{\tilde{x}_1 \le t} \ge \alpha^{\nicefrac{1}{n}} \right\} = \var{\alpha^{\nicefrac{1}{n}}}{\tilde{x}_1}.
  \end{align*}
\end{proof}

\begin{proof}[Proof of \cref{thm:lower bound}]
Define a restricted set of actions $\mathcal{A}_2 = \mathcal{A} \setminus \left\{ a_1 \right\}$. As in the remainder of the paper, we use $\alpha  = 1- \delta$ to simplify the notation in this proof. 

From the definition of regret in~\eqref{eq:regret-high-confidence-VaR} and the monotonicity of VaR~\cite{Shapiro2014} we get that the regret of the $\bm{\pi}$ can be lower bounded as the maximum regret compared only to actions in $\mathcal{A}_2$:
\begin{align*}
  \mathfrak{R}_{\delta}(\bm{\pi}) &= \var{\alpha}{\max_{a\in \mathcal{A}} r(\bm{\tilde{\theta}}, a) - r(\bm{\tilde{\theta}}, a_1) }
\ge \var{\alpha}{\max_{a\in \mathcal{A}_2} r(\bm{\tilde{\theta}}, a) - r(\bm{\tilde{\theta}}, a_1) }.
\end{align*}
From the theorem's assumptions, the random variables $\tilde{z}_{a} = r(\bm{\tilde{\theta}}, a) - r(\bm{\tilde{\theta}}, a_1)$ for $a\in \mathcal{A}_2$ are independent and identically distributed as $\mathcal{N}(\mu_2 - \mu_1, \sigma_2^2 + \sigma_1^2)$ where $\sigma_i = \Sigma_{i,i}$ for $i = 1, \dots , k$. Then, using the inequality above and \cref{lem:var-multi-variable} we get that
\begin{align*}
  \mathfrak{R}_{\delta}(\bm{\pi})
  &= \var{\alpha}{\max_{a\in \mathcal{A}_2} r(\bm{\tilde{\theta}}, a) - r(\bm{\tilde{\theta}}, a_1) }
  \ge \var{1-\alpha^{\nicefrac{1}{k}}}{\tilde{z}} \\
  &= (\mu_2 - \mu_1) + \sqrt{\sigma_1^2 + \sigma_2^2} \cdot  \var{1-\alpha^{\nicefrac{1}{k}}}{\frac{\tilde{z}_{a_2} - (\mu_2 - \mu_1)}{\sqrt{\sigma_1^2 + \sigma_2^2}} }.
\end{align*}
Here, we used the fact that VaR is positively homogenous and translation equivariant. The result follows by \cref{lem:var-single-lower} since the random variable inside of the VaR above is distributed as $\mathcal{N}(0,1)$.
\end{proof}

\subsection{Proof of \cref{thm:var-max-l2}}

This result follows from standard robust optimization techniques~(see, for example, \citet{Gupta2019, Petrik2019}) as well as bandit analysis. In fact, similar or perhaps almost identical analysis has been used to analyze the regret of \texttt{FlatOPO} in \citet{Hong2023}. We provide an independent proof for the sake of completeness.

The following two auxiliary lemmas are used to show that a robust optimization over a credible region can be used to upper bound the VaR of any random variable. The first auxiliary lemma establishes a sufficient condition for a robust optimization being an overestimate of VaR.

\begin{lemma} \label{lem:sufficiently-robust}
Suppose that we are given an ambiguity set $\mathcal{P} \subseteq \mathcal{X}$, a function $g\colon \mathcal{X} \to \Real$, and a random variable $\bm{\tilde{x}}:\Omega \to \mathcal{X}$. If $\mathcal{P} \cap \mathcal{Z} \neq \emptyset$ for 
$\mathcal{Z} = \big\{ \bm{x} \in \mathcal{X} \mid g(\bm{x}) \ge \var{\alpha}{g(\bm{\tilde{x}})} \big\}$, then
  \[
    \var{\alpha}{g(\bm{\tilde{x}})} \; \le \; \sup_{\bm{x}\in \mathcal{P}} \; g(\bm{x}) \,.
  \]
\end{lemma}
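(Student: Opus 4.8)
The plan is to exploit the nonemptiness hypothesis directly: since $\mathcal{P} \cap \mathcal{Z}$ is assumed nonempty, it contains at least one witness point, and that single point suffices to establish the bound. First I would fix any $\bm{x}_0 \in \mathcal{P} \cap \mathcal{Z}$. Membership in $\mathcal{Z}$ gives, by the very definition of that set, the inequality $g(\bm{x}_0) \ge \var{\alpha}{g(\bm{\tilde{x}})}$. Membership in $\mathcal{P}$ gives $\sup_{\bm{x}\in \mathcal{P}} g(\bm{x}) \ge g(\bm{x}_0)$, since the supremum of $g$ over $\mathcal{P}$ dominates its value at any particular element of $\mathcal{P}$. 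Chaining these two inequalities yields $\sup_{\bm{x}\in \mathcal{P}} g(\bm{x}) \ge g(\bm{x}_0) \ge \var{\alpha}{g(\bm{\tilde{x}})}$, which is exactly the asserted bound.

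I expect essentially no analytic difficulty here; the statement is a near-tautological consequence of how $\mathcal{Z}$ is defined, and the argument uses no property of VaR beyond its being a real number (so the same proof would work with any fixed threshold in place of $\var{\alpha}{g(\bm{\tilde{x}})}$). The only point warranting a word of care is the degenerate case where $\var{\alpha}{g(\bm{\tilde{x}})} = -\infty$ or $\sup_{\bm{x}\in\mathcal{P}} g(\bm{x}) = +\infty$, but the witness-point chain of inequalities remains valid in the extended reals, so no separate treatment is needed. The substantive work — verifying that the intersection hypothesis $\mathcal{P} \cap \mathcal{Z} \neq \emptyset$ actually holds, i.e.\ that the ambiguity set $\mathcal{P}$ is large enough to reach into the high-$g$ region $\mathcal{Z}$ — lives entirely in the applications of this lemma (such as establishing that $\mathcal{P}_\delta$ is a $(1-\delta)$-credible region in \cref{thm:var-max-l2}) and is not part of the present claim, so the proof itself is immediate.
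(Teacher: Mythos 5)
Your argument is correct and is essentially identical to the paper's own proof: both pick a witness point in $\mathcal{P}\cap\mathcal{Z}$ and chain $\sup_{\bm{x}\in\mathcal{P}} g(\bm{x}) \ge g(\bm{x}_0) \ge \var{\alpha}{g(\bm{\tilde{x}})}$. Nothing further is needed.
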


\begin{proof}
By the hopothesis, there exists some $\bm{\hat{x}} \in \mathcal{P} \cap \mathcal{Z}$. Then, we have
$\sup_{\bm{x}\in \mathcal{P}} g(\bm{x}) \; \ge \; g(\bm{\hat{x}}) \; \ge \; \var{\alpha}{g(\bm{\tilde{x}})}$ that concludes the proof, where the first inequality is by definition and the second one is from the definition of the set $\mathcal Z$.
\end{proof}

The second auxiliary lemma shows that a credible region is sufficient to upper bound VaR using a robust optimization problem.

\begin{lemma} \label{prop:credible-upper}
Suppose that we are given an ambiguity set $\mathcal{P} \subseteq \mathcal{X}$, a function $g\colon \mathcal{X} \to \Real$, and a random variable $\bm{\tilde{x}}:\Omega \to \mathcal{X}$. Then, we have
  \[
    \Pr{\bm{\tilde{x}} \in  \mathcal{P}} \ge \alpha
    \qquad \Longrightarrow \qquad 
  \var{\alpha}{g(\bm{\tilde{x}})} \;\le\; \sup_{\bm{x}\in \mathcal{P}} g(\bm{x})\,. 
 \]
\end{lemma}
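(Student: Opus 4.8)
The plan is to reduce the statement directly to the defining inequality of $\var{\alpha}{\cdot}$, since the hypothesis $\Pr{\bm{\tilde{x}}\in\mathcal{P}}\ge\alpha$ is really a tail bound in disguise. Write $s := \sup_{\bm{x}\in\mathcal{P}} g(\bm{x})$. The first and essentially only substantive step is to note that every realization of $\bm{\tilde{x}}$ landing in $\mathcal{P}$ has $g$-value at most $s$; hence the event $\{\bm{\tilde{x}}\in\mathcal{P}\}$ is contained in $\{g(\bm{\tilde{x}})\le s\}$, and monotonicity of probability gives $\P{g(\bm{\tilde{x}})\le s}\ge \Pr{\bm{\tilde{x}}\in\mathcal{P}}\ge\alpha$, that is, $\P{g(\bm{\tilde{x}})> s}\le 1-\alpha$.

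Next I would feed this tail bound into the infimum form of VaR in~\eqref{eq:var-definition}. The inequality $\P{g(\bm{\tilde{x}})>s}\le 1-\alpha$ says precisely that $s$ belongs to the set $\{t\in\Real \mid \P{g(\bm{\tilde{x}})>t}\le 1-\alpha\}$, whose infimum is by definition $\var{\alpha}{g(\bm{\tilde{x}})}$. Therefore $\var{\alpha}{g(\bm{\tilde{x}})}\le s=\sup_{\bm{x}\in\mathcal{P}}g(\bm{x})$, which is the claim. This argument is short because it matches the infimum characterization of VaR verbatim.

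An alternative, in the spirit of the preceding \cref{lem:sufficiently-robust}, is to verify its hypothesis $\mathcal{P}\cap\mathcal{Z}\neq\emptyset$ and then invoke it as a black box. Assuming the contrary, every $\bm{x}\in\mathcal{P}$ satisfies $g(\bm{x})<\var{\alpha}{g(\bm{\tilde{x}})}$, and the same containment-of-events argument yields $\P{g(\bm{\tilde{x}})<\var{\alpha}{g(\bm{\tilde{x}})}}\ge\alpha$. The anticipated obstacle is that this does \emph{not} contradict the definition of VaR in the boundary case $\P{g(\bm{\tilde{x}})<\var{\alpha}{g(\bm{\tilde{x}})}}=\alpha$: when the supremum $s$ equals $\var{\alpha}{g(\bm{\tilde{x}})}$ but is not attained on $\mathcal{P}$, the intersection $\mathcal{P}\cap\mathcal{Z}$ is genuinely empty while the conclusion still holds with equality. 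For this reason I would favor the direct argument above, which involves only the non-strict event $\{g(\bm{\tilde{x}})>s\}$ together with a non-strict bound, thereby sidestepping the strict-versus-nonstrict subtlety entirely. If one prefers the \cref{lem:sufficiently-robust} route, it does go through cleanly under the mild extra assumption (satisfied in the paper's application, where $\mathcal{P}_{\delta}$ is a compact ellipsoid and $g$ is continuous, so the supremum is attained and $\mathcal{P}\cap\mathcal{Z}\neq\emptyset$).
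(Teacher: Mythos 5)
Your direct argument is correct and is a genuinely different route from the paper's. The paper proves the lemma by contradiction through \cref{lem:sufficiently-robust}: it defines $\mathcal{Z} = \{\bm{x} \mid g(\bm{x}) \ge \var{\alpha}{g(\bm{\tilde{x}})}\}$, asserts $\Pr{\bm{\tilde{x}}\in\mathcal{Z}} > 1-\alpha$ from the supremum characterization of VaR, and derives a contradiction from $\Pr{\bm{\tilde{x}}\in\mathcal{P}\cup\mathcal{Z}} = \Pr{\bm{\tilde{x}}\in\mathcal{P}} + \Pr{\bm{\tilde{x}}\in\mathcal{Z}} > \alpha + (1-\alpha)$. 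You instead match the infimum characterization directly: the event inclusion $\{\bm{\tilde{x}}\in\mathcal{P}\}\subseteq\{g(\bm{\tilde{x}})\le s\}$ gives $\P{g(\bm{\tilde{x}})>s}\le 1-\alpha$, so $s$ lies in the set whose infimum defines $\var{\alpha}{g(\bm{\tilde{x}})}$. Your version is both shorter and more robust: it needs no auxiliary lemma and, as you note, it sidesteps the strict-versus-nonstrict boundary issue that the contradiction route must confront. Indeed, the paper's proof is slightly loose exactly there --- in general only $\Pr{\bm{\tilde{x}}\in\mathcal{Z}}\ge 1-\alpha$ is guaranteed by the sup characterization (the tail function is left-continuous, so the strict inequality can fail at the supremum), and the displayed chain ending in $\alpha + 1 - \alpha > 1$ does not actually produce a strict contradiction without that strictness. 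Your diagnosis of when the \cref{lem:sufficiently-robust} route does go through (supremum attained, e.g.\ $\mathcal{P}$ compact and $g$ continuous, as in the paper's application to the ellipsoid $\mathcal{P}_{\delta}$) is also accurate. The only thing each approach ``buys'' over the other is that the paper's version reuses \cref{lem:sufficiently-robust}, which it has already stated; yours renders that lemma unnecessary for this result.
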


\begin{proof}
Our proof is by contradiction using \cref{lem:sufficiently-robust}. We start by assuming that $\Pr{\bm{\tilde{x}} \in  \mathcal{P}}$. Define $\mathcal{Z} = \big\{ \bm{x} \in \mathcal{X} \mid  g(\bm{x}) \ge \var{\alpha}{g(\bm{\tilde{x}})} \big\}$ as in \cref{lem:sufficiently-robust}. From \cref{lem:sufficiently-robust}, we know that if $\sup_{\bm{x}\in \mathcal{P}} g(\bm{x}) \;\ge\; \var{\alpha}{g(\bm{\tilde{x}})}$ is false, then we should have $\mathcal{P} \cap \mathcal{Z} = \emptyset$. By the definition of VaR, we have that $\Pr{\bm{\tilde x} \in \mathcal{Z}} > 1- \alpha$.  Then, we get a contradiction with $\mathcal{P} \cap \mathcal{Z} = \emptyset$ as follows
  \[
   1 \ge \Pr{\bm{\tilde{x}} \in \mathcal{P} \cup \mathcal{Z}} = \Pr{\bm{\tilde{x}} \in \mathcal{P}} + \Pr{\bm{\tilde{x}} \in \mathcal{Z}} > \alpha + 1 - \alpha > 1\,.
 \]
\end{proof}

The following lemma uses a standard technique for constructing a credible region for a multivariate normal distribution~\cite{Hong2023, Gupta2019}.

\begin{lemma} \label{lem:normal-credible}
Suppose that $\bm{\tilde{x}} \sim \mathcal{N}(\bm{\mu}, \bm{\Sigma})$ is a multi-variate normal random variable with a mean $\bm{\mu}\in \Real^d$ and a covariance matrix $\bm{\Sigma} \in \Real^{d \times d}$. Then the set $\mathcal{P} \subseteq \Real^d$, defined as
  \[
   \mathcal{P} =  \left\{ \bm{x}\in \Real^d \mid
    \| \bm{x} - \bm{\mu}\|_{\bm{\Sigma}^{-1}}^2 \le \chi_d^2(\alpha) \right\},
  \]
with $\chi_d^2(\alpha)$ being the $\alpha$-quantile of the $\chi^2_d$ distribution, satisfies that
\(
  \Pr{\bm{\tilde{x}} \in \mathcal{P} } = \alpha \,.\)
\end{lemma}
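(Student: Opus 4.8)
The plan is to show that the quadratic form $\|\bm{\tilde{x}} - \bm{\mu}\|_{\bm{\Sigma}^{-1}}^2 = (\bm{\tilde{x}} - \bm{\mu})\tr \bm{\Sigma}^{-1} (\bm{\tilde{x}} - \bm{\mu})$ follows a $\chi^2_d$ distribution, after which the result is immediate from the definition of the quantile. First I would whiten the random variable: since $\bm{\Sigma}$ is positive definite, it admits a symmetric positive-definite square root $\bm{\Sigma}^{1/2}$, so I can define $\bm{\tilde{z}} = \bm{\Sigma}^{-1/2}(\bm{\tilde{x}} - \bm{\mu})$. Because linear transformations preserve normality (as already invoked in the proof of \cref{lem:regret-normal}), $\bm{\tilde{z}}$ is multivariate Gaussian with mean $\bm{\Sigma}^{-1/2}(\bm{\mu} - \bm{\mu}) = \bm{0}$ and covariance $\bm{\Sigma}^{-1/2} \bm{\Sigma} \bm{\Sigma}^{-1/2} = \bm{I}$; that is, $\bm{\tilde{z}} \sim \mathcal{N}(\bm{0}, \bm{I})$, so its components are $d$ independent standard normals.

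Next I would rewrite the quadratic form in terms of $\bm{\tilde{z}}$. A direct computation gives
\[
\| \bm{\tilde{x}} - \bm{\mu}\|_{\bm{\Sigma}^{-1}}^2 = (\bm{\tilde{x}} - \bm{\mu})\tr \bm{\Sigma}^{-1}(\bm{\tilde{x}} - \bm{\mu}) = \bm{\tilde{z}}\tr \bm{\tilde{z}} = \sum_{i=1}^d \tilde{z}_i^2,
\]
which is by definition a sum of squares of $d$ i.i.d.\ standard normals, hence distributed as $\chi^2_d$. This is exactly the fact noted in the proof sketch of \cref{thm:var-max-l2}.

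Finally, I would close the argument by invoking the definition of the quantile. Since $\| \bm{\tilde{x}} - \bm{\mu}\|_{\bm{\Sigma}^{-1}}^2 \sim \chi^2_d$ and $\chi^2_d(\alpha)$ is by definition the value such that $\Pr{\chi^2_d \le \chi^2_d(\alpha)} = \alpha$ (the $\chi^2_d$ distribution being continuous, so there is no ambiguity between strict and non-strict inequalities), we get
\[
\Pr{\bm{\tilde{x}} \in \mathcal{P}} = \Pr{\| \bm{\tilde{x}} - \bm{\mu}\|_{\bm{\Sigma}^{-1}}^2 \le \chi^2_d(\alpha)} = \alpha,
\]
as claimed. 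There is no serious obstacle here; the only points requiring a little care are that $\bm{\Sigma}^{-1/2}$ is well-defined and symmetric (guaranteed by positive definiteness of $\bm{\Sigma}$, which is assumed throughout the paper), and that the continuity of the $\chi^2_d$ distribution means the equality $\Pr{\bm{\tilde{x}} \in \mathcal{P}} = \alpha$ holds exactly rather than as an inequality. This lemma then feeds directly into \cref{prop:credible-upper} with $\alpha \mapsto 1-\delta$ to establish that $\mathcal{P}_\delta$ is a valid $(1-\delta)$-credible region for \cref{thm:var-max-l2}.
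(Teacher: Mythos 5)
Your proof is correct and follows essentially the same route as the paper: whiten via $\bm{\Sigma}^{-1/2}(\bm{\tilde{x}}-\bm{\mu})\sim\mathcal{N}(\bm{0},\bm{I})$, observe that the squared norm is a sum of $d$ i.i.d.\ squared standard normals and hence $\chi^2_d$-distributed, and conclude by the definition of the quantile. Your added remarks on the symmetry of $\bm{\Sigma}^{-1/2}$ and on continuity of the $\chi^2_d$ distribution (yielding exact equality rather than an inequality) are sound and slightly more explicit than the paper's one-line justification.
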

\begin{proof}
One can readily verify that $\bm{\Sigma}^{{-\frac{1}{2}}} (\bm{\tilde{x}} - \bm{\mu}) \sim \mathcal{N}(\bm{0}, \bm{I})$ is a standard multivariate normal distribution. The norm of this value is a sum of i.i.d. standard normal variables, and thus, is distributed according to the $\chi^2_d$ distribution with $d$ degrees of freedom:
\[
\left(\bm{\Sigma}^{{-\frac{1}{2}}} (\bm{\tilde{x}} - \bm{\mu})\right)\tr \left( \bm{\Sigma}^{{-\frac{1}{2}}} (\bm{\tilde{x}} - \bm{\mu})\right) =  \| \bm{\tilde{x}} - \bm{\mu} \|_{\bm{\Sigma}^{-1}}^{2} \sim \chi^2_d\,.
\]
Therefore, by algebraic manipulation and the definition of a quantile, we obtain that
\[
  \Pr{\bm{\tilde{x}} \in \mathcal{P} } = \Pr{ \| \bm{\tilde{x}} - \bm{\mu}\|_{\bm{\Sigma}^{-1}}^2 \le \chi_d^2(\alpha)  } = \alpha \,.
\]
\end{proof}

Finally, the following lemma derives the optimal solution of a quadratic optimization problem that arises in the formulation. 
\begin{lemma} \label{lem:quadratic-solution}
The equality
\begin{equation}
\label{eq:temp00}
\max_{\bm{p}\in\mathbb R^d} \left\{ \bm{x}\tr \bm{p} \mid \left\| \bm{p} - \bm{\hat{p}} \right\|_{\bm{C}}^2 \le b,\, \bm{p}\in \Real^k  \right\} \; =\; \bm{x}\tr \bm{\hat{p}} + \sqrt{b} \cdot \| \bm{x} \|_{\bm{C}^{-1}}  \,
\end{equation}
holds for any given vectors $\bm{x}, \bm{\hat{p}}\in \Real^d$ and a matrix $\bm{C} \in \Real^{d\times d} $ that is positive definite: $\bm{C} \succ \bm{0}$. 
\end{lemma}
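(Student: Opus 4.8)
The plan is to reduce the problem to maximizing a linear functional over a Euclidean ball via two successive changes of variables, and then to invoke the Cauchy--Schwarz inequality, whose equality case supplies the optimizer.

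First I would translate the center of the ellipsoid to the origin. Setting $\bm{q} = \bm{p} - \bm{\hat{p}}$, the objective becomes $\bm{x}\tr \bm{p} = \bm{x}\tr \bm{\hat{p}} + \bm{x}\tr \bm{q}$ and the constraint becomes $\bm{q}\tr \bm{C} \bm{q} \le b$. Since the term $\bm{x}\tr \bm{\hat{p}}$ is an additive constant, it factors out of the maximization, and it remains to show that $\max\{\bm{x}\tr \bm{q} \mid \bm{q}\tr \bm{C} \bm{q} \le b\} = \sqrt{b}\,\|\bm{x}\|_{\bm{C}^{-1}}$.

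Because $\bm{C} \succ \bm{0}$, it admits a symmetric positive-definite square root $\bm{C}^{\nicefrac{1}{2}}$ with inverse $\bm{C}^{-\nicefrac{1}{2}}$ (via the spectral theorem). I would then whiten the variable by setting $\bm{r} = \bm{C}^{\nicefrac{1}{2}} \bm{q}$, so the constraint reads $\|\bm{r}\|_2^2 \le b$ and the objective reads $\bm{x}\tr \bm{C}^{-\nicefrac{1}{2}} \bm{r} = (\bm{C}^{-\nicefrac{1}{2}}\bm{x})\tr \bm{r}$. Cauchy--Schwarz then gives $(\bm{C}^{-\nicefrac{1}{2}}\bm{x})\tr \bm{r} \le \|\bm{C}^{-\nicefrac{1}{2}}\bm{x}\|_2\,\|\bm{r}\|_2 \le \sqrt{b}\,\|\bm{C}^{-\nicefrac{1}{2}}\bm{x}\|_2 = \sqrt{b}\,\|\bm{x}\|_{\bm{C}^{-1}}$, where the last equality uses $\|\bm{C}^{-\nicefrac{1}{2}}\bm{x}\|_2^2 = \bm{x}\tr \bm{C}^{-1}\bm{x}$. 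This establishes the $\le$ direction.

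For the reverse inequality I would exhibit a feasible maximizer attaining the bound. Assuming $\bm{x} \neq \bm{0}$ (the case $\bm{x} = \bm{0}$ is trivial, as both sides vanish), I would choose $\bm{r}\opt = \sqrt{b}\,\bm{C}^{-\nicefrac{1}{2}}\bm{x} / \|\bm{C}^{-\nicefrac{1}{2}}\bm{x}\|_2$, which satisfies $\|\bm{r}\opt\|_2 = \sqrt{b}$ and is aligned with $\bm{C}^{-\nicefrac{1}{2}}\bm{x}$, so both inequalities above hold with equality. Unwinding the substitutions yields $\bm{p}\opt = \bm{\hat{p}} + \bm{C}^{-\nicefrac{1}{2}}\bm{r}\opt = \bm{\hat{p}} + \sqrt{b}\,\bm{C}^{-1}\bm{x} / \|\bm{x}\|_{\bm{C}^{-1}}$, which is feasible and achieves the value $\bm{x}\tr \bm{\hat{p}} + \sqrt{b}\,\|\bm{x}\|_{\bm{C}^{-1}}$. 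There is no genuine obstacle here: the only points requiring care are the existence and symmetry of $\bm{C}^{\nicefrac{1}{2}}$ and keeping the direction of Cauchy--Schwarz consistent so that the attained optimizer is truly feasible; equivalently, one could derive the same optimizer from the KKT stationarity condition $\bm{x} = 2\lambda \bm{C} \bm{q}$ with an active constraint.
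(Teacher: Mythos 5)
Your proof is correct and arrives at the same optimizer the paper uses; the paper simply asserts the KKT point and substitutes, whereas your whitening-plus-Cauchy--Schwarz argument gives a self-contained verification of both the upper bound and its attainment, so the two are essentially the same approach. One small point in your favor: your maximizer $\bm{p}\opt = \bm{\hat{p}} + \sqrt{b}\,\bm{C}^{-1}\bm{x}/\|\bm{x}\|_{\bm{C}^{-1}}$ carries the correct normalization (the $\bm{\Sigma}^{-1}$-norm in the denominator), while the paper's displayed $\bm{p}\opt$ multiplies by $\|\bm{x}\|_{\bm{C}^{-1}}$ instead of dividing --- a typo, since that point satisfies the constraint and attains the stated value only when $\|\bm{x}\|_{\bm{C}^{-1}} = 1$.
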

\begin{proof}
From the convexity of the optimization problem in~\eqref{eq:temp00}, we can construct the optimizer $\bm{p}\opt$ using KKT conditions as
\[
\bm{p}\opt = \bm{\hat{p}} + \sqrt{b} \cdot \| \bm{x} \|_{\bm{C}^{-1}} \cdot \bm{C}^{-1} \bm{x}\,.
\]
The result then follows by substituting $\bm{p}\opt$ into the maximization problem in the lemma.
\end{proof}

We are now ready to prove the main theorem.
\begin{proof}[Proof of \cref{thm:var-max-l2}]
We derive the bound in~\eqref{eq:var-max-l2-eq} using the robust representation of VaR~\cite{Ben-Tal2009}. We first construct the set $\mathcal{P}_{\delta} \subseteq \Real^d$ as  
\begin{equation} 
\label{eq:robust-set}
\mathcal{P}_\delta \;=\;  \left\{ \bm{\theta} \in \Real^d \mid \| \bm{\theta} - \bm{\mu} \|^{2}_{\bm{\Sigma}^{-1}} \le \chi^2_d(1-\delta) \right\} \,.
\end{equation}
Using \cref{lem:normal-credible} and the definition of $\mathcal{P}_{\delta}$ in~\eqref{eq:robust-set}, we can see that $\mathcal{P}_{\delta}$ is indeed a credible region:
  \[
    \Pr{\bm{\tilde{\theta}} \in  \mathcal{P}_{\delta}} = 1-\delta\,.
  \]
Then, \cref{prop:credible-upper} gives us the first inequality in~\eqref{eq:var-max-l2-eq}:
  \[
  \mathfrak{R}_{\delta}(\bm{\pi}) 
  \le
    \max_{\bm{\theta}\in \mathcal{P}_{\delta}} \max_{a\in \mathcal{A}} \left( r(a; \bm{\theta}) - r(\bm{\pi}; \bm{\theta}) \right) \,.
 \] 
The second inequality in~\eqref{eq:var-max-l2-eq} is a consequence of \cref{lem:quadratic-solution} with $\bm{x} = \bm{\Phi}(\bm{1}_a - \bm{\pi})$, $\bm{\hat{p}} = \bm{\mu}$, $\bm{p} = \bm{\theta}$, $\bm{C} = \bm{\Sigma}^{-1}$, and $b = \sqrt{\chi^2(1-\delta)}$. 

Finally, the inequality $\sqrt{\chi^2_d(1-\delta)} \le  \sqrt{5 d \log(1/\delta)}$ follows from Lemma~1 in \citet{Laurent2000} as in the proof of Lemma~3 in \citet{Hong2023}.
\end{proof}

\subsection{Proof of \cref{cor:algo-is-great}}
% \textbf{Exponential cone formulation}
% \begin{equation} \label{eq:socp-formulation}
% \begin{mprog}
%   \minimize{\bm{\pi}\in \Real^k_+, \bm{s}\in \Real^k_+, t\in \Real}  t
%   \stc t \ge (\bm{1}_a - \bm{\pi})\tr \bm{\Phi}\tr \bm{\mu} + s_a\cdot \nu
%   \cs s_a^2 \ge (\bm{1}_a - \bm{\pi})\tr \bm{\Phi}\tr \bm{\Sigma} \bm{\Phi} (\bm{1}_a - \bm{\pi})
%   \cs \bm{1}\tr \bm{\pi} = 1
% \end{mprog}
% \end{equation}

% Recall that the exponential cone $\mathcal{k}_{\mathrm{e}}$ is defined as
% \[
%   \mathcal{K}_{\mathrm{e}} =
%   \left\{  (x_1, x_2, x_3) \mid  x_1 \ge x_2 \exp \left(\frac{x_3}{x_2}\right), x_2 > 0\right\}
%   \cup
%   \left\{ (x_1, 0, x_3) \mid x_1 \ge 0, x_3\le 0 \right\}\,.
% \]
% The constraints $t \le \log x$ and $x > 0$ are equivalent to \cite{mosek2022}
% \[
%  (x,1,t) \in \mathcal{K}_{\mathrm{e}}.  
% \]

\begin{proof}
The corollary is an immediate consequence of \cref{thm:var-max-combined,thm:var-max-l2} and the construction of \cref{alg:BRMOB}. By construction, $\bm{\pi}^0$ is the solution to  
\[
  \bm{\pi}^0 \in \arg \min_{\bm{\pi}\in \probs{k}} \max_{a\in \mathcal{A}} \, \bm{\mu}\tr \bm{\Phi} (\bm{1}_a - \bm{\pi}) + \norm{ \bm{\Phi} (\bm{1}_a - \bm{\pi})}_{\bm{\Sigma}} \cdot  \nu_a^0  \,,
\]
where $\nu_a^0$ is defined in \cref{alg:BRMOB}. Therefore, using \cref{thm:var-max-combined,thm:var-max-l2} to upper bound $\nu_a$, we obtain
\[
\mathfrak{R}_{\delta}(\bm{\pi}^0) \le \min_{\bm{\pi}\in \probs{k}} \max_{a\in \mathcal{A}} \, \bm{\mu}\tr \bm{\Phi} (\bm{1}_a - \bm{\pi}) + \norm{ \bm{\Phi} (\bm{1}_a - \bm{\pi})}_{\bm{\Sigma}} \cdot \min \left\{\sqrt{2 \log (k/\delta) } , \sqrt{5 d \log(1/\delta) } \right\}\,.
\]
This proves the corollary when $i\opt = 0$ in \cref{alg:BRMOB}. Then, using \cref{thm:var-max-combined} with general $\bm{\xi}$, we observe that the algorithm selects $i\opt > 0$ only when $\mathfrak{R}_{\delta}(\bm{\pi}^{i\opt}) \le \rho^{i\opt} \le \rho^0$, which means that the corollary also holds.
\end{proof}

%%%%%%%%%%%%%%%%%%%%%%%%%%%%%%%%%%%%%%%%%%%%%%
%%%%%%%%%%%%%%%%%%%%%%%%%%%%%%%%%%%%%%%%%%%%%%
%%%%%%%%%%%%%%%%%%%%%%%%%%%%%%%%%%%%%%%%%%%%%%
%%%%%%%%%%%%%%%%%%%%%%%%%%%%%%%%%%%%%%%%%%%%%%
%%%%%%%%%%%%%%%%%%%%%%%%%%%%%%%%%%%%%%%%%%%%%%

\section{Proofs of Section \ref{sec:bayes-regr-analys}} \label{sec:theory-proofs}

\subsection{Proof of \cref{thm:regret-bound}}

\begin{proof}%[Proof of \cref{thm:regret-bound}]
To prove the first claim of the theorem, let $\bm{\bar{\pi}}$ be a policy that minimizes the linear component of the regret:
  \[
    \bm{\bar{\pi}} \in \arg \min_{\bm{\pi}\in \probs{k}}  \bm{\mu}\tr \bm{\Phi} (\bm{1}_a - \bm{\pi}) \,.
  \]
Note that the minimum above is upper-bounded by 0. Next we use \cref{cor:algo-is-great} to bound the regret:
\begin{align*}
    \mathfrak{R}_{\delta}(\bm{\hat\pi})
    &\le \min_{\bm{\pi}\in \probs{k}} \max_{a\in \mathcal{A}} \, \bm{\mu}\tr \bm{\Phi} (\bm{1}_a - \bm{\pi}) + \norm{ \bm{\Phi} (\bm{1}_a - \bm{\pi})}_{\bm{\Sigma}_n} \cdot  \min \left\{\sqrt{2 \log (\nicefrac{k}{\delta}) } , \sqrt{5 d \log(\nicefrac{1}{\delta}) } \right\}  \\
    &\le \max_{a\in \mathcal{A}} \, \bm{\mu}\tr \bm{\Phi} (\bm{1}_a - \bm{\bar{\pi}}) + \norm{ \bm{\Phi} (\bm{1}_a - \bm{\bar{\pi}})}_{\bm{\Sigma}_n} \cdot  \min \left\{\sqrt{2 \log (\nicefrac{k}{\delta}) } , \sqrt{5 d \log(\nicefrac{1}{\delta}) } \right\}  \\
    &\le \max_{a\in \mathcal{A}} \, \norm{ \bm{\Phi} (\bm{1}_a - \bm{\bar{\pi}})}_{\bm{\Sigma}_n} \cdot  \min \left\{\sqrt{2 \log (\nicefrac{k}{\delta}) } , \sqrt{5 d \log(\nicefrac{1}{\delta}) } \right\} .
\end{align*}
Now, we bound the term $\norm{ \bm{\Phi} (\bm{1}_a - \bm{\bar{\pi}})}_{\bm{\Sigma}_n}$. Recall that $\| \bm{\bar{\pi}} \|_2 \le \| \bm{\bar{\pi}} \|_1 \le 1 $, since $\bm{\bar{\pi}} \in \probs{k}$. Then, for each $a\in \mathcal{A}$, we have by algebraic manipulation that
\begin{align*}
    \norm{ \bm{\Phi} (\bm{1}_a - \bm{\bar{\pi}})}_{\bm{\Sigma}_n}^2
    &\; =\;  (\bm{1}_a - \bm{\bar{\pi}})\tr \bm{\Phi}\tr \bm{\Sigma}_n \bm{\Phi} (\bm{1}_a - \bm{\bar{\pi}}) \\
    &\; =\;   \bm{1}_a\tr \bm{\Phi}\tr \bm{\Sigma}_n \bm{\Phi} \bm{1}_a  +
      \bm{\bar{\pi}}\tr \bm{\Phi}\tr \bm{\Sigma}_n \bm{\Phi} \bm{\bar{\pi}}
    - 2 \cdot \bm{1}_a\tr \bm{\Phi}\tr \bm{\Sigma}_n \bm{\Phi} \bm{\bar{\pi}}\\
    &\; \overset{\text{(a)}}{\le}\;  4 \max_{a'\in \mathcal{A}}  \bm{1}_{a'}\tr \bm{\Phi}\tr \bm{\Sigma}_n \bm{\Phi} \bm{1}_{a'}
      \; =\;  4 \max_{a'\in \mathcal{A}}  \bm{\phi}_{a'}\tr \bm{\Sigma}_n \bm{\phi}_{a'} \,.
\end{align*}
{\em (a)} holds by the Cauchy-Schwartz inequality because
\[
  -\bm{1}_a\tr \bm{\Phi}\tr \bm{\Sigma}_n \bm{\Phi} \bm{\bar{\pi}}
  \; \le\; 
  \|\bm{\Sigma}_n^{\nicefrac{1}{2}}\bm{\Phi}\bm{1}_a\|_2 \|\bm{\Sigma}_n^{\nicefrac{1}{2}}\bm{\Phi} \bm{\bar{\pi}}\|_2
  \; \le\;
  \max_{a'\in \mathcal{A}}   \|\bm{\Sigma}_n^{\nicefrac{1}{2}}\bm{\Phi}\bm{1}_{a'}\|_2^2\,.
\]
The last inequality in the above equation is satisfied because 
$\|\bm{\Sigma}_n^{\nicefrac{1}{2}}\bm{\Phi} \bm{\bar{\pi}}\|_2 \le \sum_{a'\in \mathcal{A}} \bar{\pi}_{a'}   \|\bm{\Sigma}_n^{\nicefrac{1}{2}}\bm{\Phi}\bm{1}_{a'}\|_2 \le  \max_{a'\in \mathcal{A}}  \|\bm{\Sigma}_n^{\nicefrac{1}{2}}\bm{\Phi}\bm{1}_{a'}\|_2$, which in turn follows by Jensen's inequality from the convexity of the $\ell_2$-norm and the fact that $\bm{\bar{\pi}}\in \probs{k}$. The term $\bm{\bar{\pi}}\tr \bm{\Phi}\tr \bm{\Sigma}_n \bm{\Phi} \bm{\bar{\pi}}$ is upper bounded by an analogous argument.

Now \cref{asm:good-data} implies the following for each $a\in \mathcal{A}$:
\begin{align}
    \bm{G}_n &\quad \succeq \quad \gamma n \cdot \bm{\phi}_a \bm{\phi}_a\tr \nonumber \\
    \bm{\Sigma}_0^{-1} + \bar{\sigma}^{-2}\bm{G}_n &\quad\succeq \quad \bm{\Sigma}_0^{-1} + \bar{\sigma}^{-2} \cdot  \gamma n \cdot \bm{\phi}_a \bm{\phi}_a\tr \quad\succ\quad \bm{0} \nonumber \\
    (\bm{\Sigma}_0^{-1} + \bar{\sigma}^{-2}\bm{G}_n)^{-1} &\quad\preceq \quad (\bm{\Sigma}_0^{-1} + \bar{\sigma}^{-2} \cdot\gamma n \cdot \bm{\phi}_a \bm{\phi}_a\tr)^{-1} \nonumber \\
    \bm{\phi}_a\tr (\bm{\Sigma}_0^{-1} + \bar{\sigma}^{-2}\bm{G}_n)^{-1}  \bm{\phi}_a &\quad\le \quad \bm{\phi}_a\tr(\bm{\Sigma}_0^{-1} + \bar{\sigma}^{-2} \cdot\gamma n \cdot \bm{\phi}_a \bm{\phi}_a\tr)^{-1}  \bm{\phi}_a \nonumber \\
    \bm{\phi}_a\tr  \bm{\Sigma}_n \bm{\phi}_a &\quad\le\quad \bm{\phi}_a\tr(\bm{\Sigma}_0^{-1} + \bar{\sigma}^{-2} \cdot\gamma n \cdot \bm{\phi}_a \bm{\phi}_a\tr)^{-1}  \bm{\phi}_a.
\label{eq:temp0}    
\end{align}
The second line holds because we assumed $\bm{\Sigma}_0 \succ 0$, and thus, $\bm{\Sigma}_0^{-1} \succ 0$, and adding a positive definite matrix preserves definiteness. The third line holds from the definiteness in the second line and \citet[corollary 7.7.4(a)]{Horn2013}. Finally, the fourth line holds from the definition of positive semi-definiteness.

We continue by applying the Woodbury matrix identity to~\eqref{eq:temp0}, which give us the following inequality for each $a\in \mathcal{A}$:
\begin{align*}
    \bm{\phi}_a\tr \bm{\Sigma}_n \bm{\phi}_a
    &\quad \le\quad   \bm{\phi}_a\tr(\bm{\Sigma}_0^{-1} + \bar{\sigma}^{-2} \cdot\gamma n \cdot \bm{\phi}_a \bm{\phi}_a\tr)^{-1}  \bm{\phi}_a 
    \quad =\quad    \frac{1}{(\bm{\phi}_a\tr \bm{\Sigma}_0 \bm{\phi}_a)^{-1} + \bar{\sigma}^{-2} \cdot\gamma n} \\
    % &\quad \le\quad   \frac{1}{\lambda_{\min}(\bm{\Sigma}_0^{-1})  + \bar{\sigma}^{-2} \cdot\gamma n} 
    &\quad \le\quad   \frac{1}{\lambda_{\max}(\bm{\Sigma}_0)^{-1}  + \bar{\sigma}^{-2} \cdot\gamma n} \, ,
\end{align*}
where $\lambda_{\max}$ computes the maximum eigenvalues of the matrix. The inequality above holds because
\[
0 \le \bm{\phi}_a\tr \bm{\Sigma}_0 \bm{\phi}_a \le \lambda_{\max}(\bm{\Sigma}_0) \| \bm{\phi}_a \|,
\]
which can be seen from the eigendecomposition of the symmetric matrix. Substituting the inequality above proves the theorem. 
% \end{proof}

% \begin{proof}[Proof of \cref{cor:tighter}]
To prove the special case of the theorem with $\bm{\mu}_n = \bm{0}$, let $\bm{\pi}^0$ be the solution in the first iteration of \cref{alg:BRMOB}. Given the posterior distribution of $\bm{\tilde{\theta}}_D$, the policy $\bm{\pi}^0$ is chosen as
\begin{align*}
\bm{\pi}^0 &\in \arg \min_{\bm{\pi}\in \probs{k}} \max_{a\in \mathcal{A}} \; \bm{0}\tr \bm{\Phi} (\bm{1}_a - \bm{\pi}) + \norm{ \bm{\Phi} (\bm{1}_a - \bm{\pi})}_{\bm{\Sigma}} \cdot  \nu_a^0  \\
&= \arg \min_{\bm{\pi}\in \probs{k}} \max_{a\in \mathcal{A}} \; \norm{ \bm{\Phi} (\bm{1}_a - \bm{\pi})}_{\bm{\Sigma}} \,. 
\end{align*}
The square of this minimization problem can be formulated as a convex quadratic program
\begin{equation} \label{eq:quadratic-program}
\min_{t\in \Real, \; \bm{\pi}\in \probs{k}} \left\{ t \mid t \ge \norm{\bm{\Sigma}_n^{\nicefrac{1}{2}}\bm{\phi}_a  - \bm{\Sigma}^{\nicefrac{1}{2}}\bm{\Phi} \bm{\pi} }_2^2 \; , \, \forall a\in \mathcal{A} \right\}\,.
\end{equation}
Because  $\bm{\Sigma}^{\nicefrac{1}{2}}\bm{\Phi} \bm{\pi} \in \Real^d$ and is a convex combination of points in $\Real^d$, there exists an optimal $\bm{\pi}^0$ such that $l = |\left\{ a\in \mathcal{A} \mid \pi^0_a > 0 \right\}| \le d+1$~\cite{Rockafellar2009}. Then, let $\hat{a} \in \arg\max_{a'\in \mathcal{A}} \pi^0_{a'}$. We have that  $\pi^0_{\hat{a}} \ge \frac{1}{l}$ because $l$ actions are positive, and the constraint $t \ge \norm{\bm{\Sigma}_n^{\nicefrac{1}{2}}\bm{\phi}_a  - \bm{\Sigma}^{\nicefrac{1}{2}}\bm{\Phi} \bm{\pi} }_2^2$  is active (holds with equality). If the constraint were not active, this would be a contradiction with the optimality of $\bm{\pi}^0$ because decreasing $\pi_{\hat{a}}^0$ would reduce the objective. Then, using the inequalities above and the triangle inequality, we get that the optimal $t\opt $ in~\eqref{eq:quadratic-program} satisfies
\begin{align*}
\sqrt{t\opt} &=  \norm{\bm{\Sigma}_n^{\nicefrac{1}{2}}\bm{\phi}_{\hat{a}}  - \bm{\Sigma}^{\nicefrac{1}{2}}\bm{\Phi} \bm{\pi}^0 }_2 
= \left(1-\max_{a''\in \mathcal{A}} \pi^0_{a''}\right) \norm{\bm{\Sigma}_n^{\nicefrac{1}{2}}\bm{\phi}_{\hat{a}} - \bm{\Sigma}_n^{\nicefrac{1}{2}}\bm{\phi}_{a'}}_2 \\
&\le  \left(1-\max_{a''\in \mathcal{A}} \pi^0_{a''}\right) \norm{\bm{\Sigma}_n^{\nicefrac{1}{2}}\bm{\phi}_{\hat{a}} - \bm{\Sigma}_n^{\nicefrac{1}{2}}\bm{\phi}_{a'}}_2 
\le  2\left(1- \max_{a''\in \mathcal{A}}\pi^0_{a''}\right) \norm{\bm{\Sigma}_n^{\nicefrac{1}{2}}\bm{\phi}_{a'}}_2 .
\end{align*}
The remainder of the proof follows from the same steps as the proof of \cref{thm:regret-bound}. The lower bound on $\max_{a'\in \mathcal{A}} \hat{\pi}_{a'}$ holds from the existence of $\pi^0$ with at most $d+1$ positive elements, as discussed above.
\end{proof}

\subsection{Proof of \cref{thm:counterexample}}

\begin{proof}[Proof of \cref{thm:counterexample}]
  First, from the construction of \cref{exm:lcb-counterexample}, we have that
  \[
   a_1 \in  \arg\min_{a\in \mathcal{A}} \mu_a - \beta \cdot \sigma_{a} = \arg\min_{a\in \mathcal{A}} \beta \cdot \sigma_{a}- \beta \cdot \sigma_{a} = \mathcal{A}, 
 \]
 and therefore $\bm{\pi}_{\mathrm{LCB}}$ is the policy returned by LCB that breaks ties as specified.
 Then, using \cref{thm:lower bound}, we bound the regret of LCB as
 \[
   \mathfrak{R}_{\delta}(\bm{\pi}_{\mathrm{LCB}})
   \; \ge \;
   \mu_{a_2} + \sigma_{a_2} \cdot \kappa_{\mathrm{l}}(k-1) =
   \beta \cdot \sigma_{a_2} + \sigma_{a_2} \cdot \kappa_{\mathrm{l}}(k-1) =
   (\beta + \kappa_{\mathrm{l}}(k-1)) \cdot  \sigma_{a_2}.
 \]

 In contrast, \texttt{Greedy} selects $a_2$ deterministically since
 \[
  a_2 \in \arg\min_{a\in \mathcal{A}} \mu_a = \left\{ a_2, \dots , a_k \right\}.
\]
Then, using \cref{thm:var-max-combined} and~\eqref{eq:var-union-normal-simple} in particular, we upper bound the regret of $\bm{\pi}_{\mathrm{G}}$ as
\begin{align*}
  \mathfrak{R}_{\delta}(\bm{\pi}_{\mathrm{G}})
  &\le \max_{a\in \mathcal{A}} \mu_a^{\bm{\pi}_{\mathrm{G}}} + \sigma_a^{\bm{\pi}_{\mathrm{G}}} \cdot \kappa_{\mathrm{u}}(k) \\
  &= \max_{a\in \left\{ a_2, \dots ,a_k \right\}} \mu_a^{\bm{\pi}_{\mathrm{G}}} + \sigma_a^{\bm{\pi}_{\mathrm{G}}} \cdot \kappa_{\mathrm{u}}(k) \\
  &= \max_{a\in \left\{ a_2, \dots ,a_k \right\}} \sqrt{ \sigma_a^{2} + \sigma_{a_2}^{2} } \cdot \kappa_{\mathrm{u}}(k) \\
  &= \max_{a\in \left\{ a_2, \dots ,a_k \right\}} \sqrt{2}\cdot \sigma_{a_2} \cdot \kappa_{\mathrm{u}}(k).
\end{align*}
The equalities follow from substituting the definitions of relative means and variances and from algebraic manipulation.
\end{proof}

\section{Sub-Gaussian Posterior} 
\label{sec:sub-gauss-post}

We discuss here how our results can extend to $\bm{\tilde{\theta}}_D$ with sub-Gaussian distributions. The modifications necessary are quite minor. The key to the approach is to generalize \cref{thm:var-max-combined} to a sub-Gaussian distribution as the following theorem states.

\begin{theorem}\label{thm:var-max-combined-subg}
Suppose that $\bm{\tilde{\theta}}_D$ is a random variable with an atomless distribution that is sub-Gaussian with mean $\bm{\mu}$ and covariance factor $\bm{\Sigma}$. Then the regret for each $\bm{\pi} \in \probs{k}$ satisfies that
\begin{equation} \label{eq:var-union-subg}
  \begin{aligned}
  \mathfrak{R}_{\delta}(\bm{\pi}) 
  &\;\le\;  \min_{\bm{\xi}\in \probs{k}}  \max_{a\in \mathcal{A}} \,  \var{1-\delta\xi_a}{ r(a; \bm{\tilde{\theta}}_D) - r(\bm{\pi}; \bm{\tilde{\theta}}_D) } \\
  &\;\le\;  
    \min_{\bm{\xi}\in \probs{k}} \max_{a\in \mathcal{A}} \, \bm{\mu}\tr \bm{\Phi} (\bm{1}_a - \bm{\pi})+ \norm{ \bm{\Phi} (\bm{1}_a - \bm{\pi})}_{\bm{\Sigma}} \cdot \sqrt{2 \log (\nicefrac{1}{\delta\xi_a}) }.
  \end{aligned}
\end{equation}
\end{theorem}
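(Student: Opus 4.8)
The plan is to run the proof of \cref{thm:var-max-combined} essentially verbatim, substituting the single place where Gaussianity entered---the exact quantile identity \eqref{eq:var-normal}---with the EVaR estimates from \cref{lem:var-evar,lem:evar-subgaussian}, which were set up in the appendix for exactly this purpose. The first inequality in \eqref{eq:var-union-subg} never uses normality, so I would reuse it unchanged; only the second, quantitative inequality has to be rederived for a sub-Gaussian tail.

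For the first inequality, I would first note that the reduction underlying \cref{lem:regret-normal} is purely algebraic: combining \eqref{eq:regret-high-confidence-VaR} with the identity $\max_{a}\,\big(r(a;\bm{\tilde{\theta}}_D) - r(\bm{\pi};\bm{\tilde{\theta}}_D)\big) = \max_a \tilde{x}_a^{\bm{\pi}}$, where $\tilde{x}_a^{\bm{\pi}} = \bm{w}_a\tr\bm{\tilde{\theta}}_D$ and $\bm{w}_a := \bm{\Phi}(\bm{1}_a - \bm{\pi})$, gives $\mathfrak{R}_{\delta}(\bm{\pi}) = \var{1-\delta}{\max_a \tilde{x}_a^{\bm{\pi}}}$ for any distribution of $\bm{\tilde{\theta}}_D$. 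I would then invoke \cref{lem:var-max-sum-better} with $\alpha = 1-\delta$ and reparametrise its free vector via $\xi_a \mapsto \delta\xi_a$ so that $\bm{\xi}\in\probs{k}$; since $1 - \delta\xi_a$ then appears as the confidence level, this reproduces the first line of \eqref{eq:var-union-subg}. The atomless hypothesis is exactly what is needed here: it ensures that the scalar quantiles of each $\tilde{x}_a^{\bm{\pi}}$ are unique, which is the standing assumption of \cref{lem:var-max-sum-better} and which held automatically in the Gaussian case.

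For the second inequality I would fix a feasible $\bm{\xi}$ and bound each summand by the chain $\var{1-\delta\xi_a}{\tilde{x}_a^{\bm{\pi}}} \le \evar{1-\delta\xi_a}{\tilde{x}_a^{\bm{\pi}}} \le \mu_a^{\bm{\pi}} + \sigma_a^{\bm{\pi}}\cdot\sqrt{-2\log(\delta\xi_a)}$, where the first step is \cref{lem:var-evar}, the second is \cref{lem:evar-subgaussian}, and $\mu_a^{\bm{\pi}} = \bm{w}_a\tr\bm{\mu}$, $\sigma_a^{\bm{\pi}} = \norm{\bm{w}_a}_{\bm{\Sigma}}$. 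Because $-2\log(\delta\xi_a) = 2\log(\nicefrac{1}{\delta\xi_a})$, taking $\max_a$ and then $\min_{\bm{\xi}\in\probs{k}}$ of both sides yields the second line of \eqref{eq:var-union-subg}. This path is in fact cleaner than the normal argument, since it lands directly on $\sqrt{2\log(\nicefrac{1}{\delta\xi_a})}$ instead of first passing through the quantile $z_{1-\delta\xi_a}$ and then bounding it.

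The step I expect to demand the most care is supplying the hypothesis of \cref{lem:evar-subgaussian}: that the scalar $\tilde{x}_a^{\bm{\pi}} = \bm{w}_a\tr\bm{\tilde{\theta}}_D$ is sub-Gaussian in the sense of \eqref{eq:scalar-rv-subgaussian} with mean $\mu_a^{\bm{\pi}}$ and variance factor $(\sigma_a^{\bm{\pi}})^2 = \bm{w}_a\tr\bm{\Sigma}\bm{w}_a$. This is precisely the multivariate definition \eqref{eq:multi-rv-subgaussian} specialised to the single direction $\bm{w} = \bm{w}_a$ with the free scalar $\lambda$ playing its usual role. The one subtlety is that $\bm{w}_a = \bm{\Phi}(\bm{1}_a-\bm{\pi})$ is a generic vector---typically of mixed sign and not a probability vector---so one must read \eqref{eq:multi-rv-subgaussian} as the directional sub-Gaussian condition holding for every $\bm{w}\in\Real^d$; once that reading is granted, the variance factor $\bm{w}_a\tr\bm{\Sigma}\bm{w}_a$ transfers verbatim and the rest of the argument is independent of normality.
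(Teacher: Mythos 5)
Your proposal is correct and follows essentially the same route as the paper's proof: the first inequality is inherited unchanged from the union-bound decomposition (\cref{lem:var-max-sum-better}), and the second is obtained by chaining $\varo \le \evaro$ (\cref{lem:var-evar}) with the sub-Gaussian EVaR bound (\cref{lem:evar-subgaussian}) applied to the directional projection $(\bm{1}_a-\bm{\pi})\tr\bm{\Phi}\tr\bm{\tilde{\theta}}_D$. If anything you are slightly more careful than the paper, which applies~\eqref{eq:multi-rv-subgaussian} with $\bm{w} = \bm{\Phi}(\bm{1}_a-\bm{\pi})$ without remarking that this vector lies outside $\probs{d}$ as written in that definition.
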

\begin{proof}
The first inequality in~\eqref{eq:var-union-subg} holds by \cref{thm:var-max-combined} since this inequality does not require that the posterior is normal. That is, we have that
\begin{align*}
  \mathfrak{R}_{\delta}(\bm{\pi}) 
    &\;\le\;  \min_{\bm{\xi}\in \probs{k}}  \max_{a\in \mathcal{A}} \,  \var{1-\delta\xi_a}{ r(a; \bm{\tilde{\theta}}_D) - r(\bm{\pi}; \bm{\tilde{\theta}}_D) } \\
    &\;=\;  \min_{\bm{\xi}\in \probs{k}}  \max_{a\in \mathcal{A}} \,  \var{1-\delta\xi_a}{ (\bm{1}_a - \bm{\pi})\tr \bm{\Phi}\tr \bm{\tilde{\theta}}_D } \\
    &\;\le\;  \min_{\bm{\xi}\in \probs{k}}  \max_{a\in \mathcal{A}} \,  \evar{1-\delta\xi_a}{ (\bm{1}_a - \bm{\pi})\tr \bm{\Phi}\tr \bm{\tilde{\theta}}_D } \,.
\end{align*}
The last inequality follows from \cref{lem:var-evar}. For each $a\in \mathcal{A}$, the definition of a multi-variate sub-Gaussian random variable in~\eqref{eq:multi-rv-subgaussian} with $\bm{w}\tr  = (\bm{1}_a - \bm{\pi})\tr \bm{\Phi}\tr$ implies that that $(\bm{1}_a - \bm{\pi})\tr \bm{\Phi}\tr \bm{\tilde{\theta}}_D$ is sub-Gaussian with mean $\mu = (\bm{1}_a - \bm{\pi})\tr \bm{\Phi}\tr \bm{\mu}$ and a variance factor $\sigma^2 = (\bm{1}_a - \bm{\pi})\tr \bm{\Phi}\tr \bm{\Sigma} \bm{\Phi} (\bm{1}_a - \bm{\pi})$. Therefore, from \cref{lem:evar-subgaussian} we have
\[
  \min_{\bm{\xi}\in \probs{k}}  \max_{a\in \mathcal{A}} \,  \evar{1-\delta\xi_a}{ (\bm{1}_a - \bm{\pi})\tr \bm{\Phi}\tr \bm{\tilde{\theta}}_D }
  \le
  \bm{\mu}\tr \bm{\Phi} (\bm{1}_a - \bm{\pi})+ \norm{ \bm{\Phi} (\bm{1}_a - \bm{\pi})}_{\bm{\Sigma}} \cdot \sqrt{2 \log (\nicefrac{1}{\delta\xi_a}) } \;,
\]
which proves the result.
\end{proof}

\Cref{thm:var-max-l2} can also be extended to the sub-Gaussian setting but seems to require an additional assumption that $\|\bm{\tilde{\theta}}-\bm{\mu}\|^2_{\bm{\Sigma}^{-1}}$ is a sub-gamma random variable, and we leave it for future work.

Armed with \cref{thm:var-max-combined-subg}, we can adapt \cref{alg:BRMOB} to the sub-Gaussian setting simply by setting $\nu^0_a = \sqrt{2 \log(\nicefrac{k}{\delta})}$. Note that~\eqref{eq:finetuning} already uses the correct inequality for a sub-Gaussian distribution.

% \begin{theorem}[Parameter-space bound] \label{thm:var-max-l2-subg}
% Suppose that $\bm{\tilde{\theta}}_D \sim \mathcal{N}(\bm{\mu}, \bm{\Sigma})$. Then the regret for each $\bm{\pi} \in \probs{k}$ satisfies that
% \begin{equation} \label{eq:var-l2-subg}
%   \begin{aligned}
%   \mathfrak{R}_{\delta}(\bm{\pi}) 
%   &\;\le\;
%     \max_{\bm{\theta}\in \mathcal{P}} \max_{a\in \mathcal{A}} \left( r(a; \bm{\tilde{\theta}}_D) - r(\bm{\pi}; \bm{\tilde{\theta}}_D) \right) \\
%   &\; \le \; 
%     \max_{a\in \mathcal{A}} \, \bm{\mu}\tr \bm{\Phi} (\bm{1}_a - \bm{\pi}) + \norm{ \bm{\Phi} (\bm{1}_a - \bm{\pi})}_{\bm{\Sigma}} \cdot \sqrt{\chi^2_d(1-\delta)} \,,
%   \end{aligned}
% \end{equation}
% where $\chi^{2}_d$ is the quantile of the $\chi^2$ distribution with $d$ degrees of freedom and $\chi^2_d(\alpha) \le  \sqrt{5 d \log \nicefrac{1}{1-\alpha} }$. Moreover, the inequality holds with $\nu = \sqrt{5 d \log \nicefrac{1}{\delta} }$  when $\|\bm{\tilde{\theta}}-\bm{\mu}\|^2_{\bm{\Sigma}^{-1}}$ is a sub-gamma random variable with scale $2$ and variance $2d$. 
% \end{theorem}

%%%%%%%%%%%%%%%%%%%%%%%%%%%%%%%%%%%%%%%%%%%%%%
%%%%%%%%%%%%%%%%%%%%%%%%%%%%%%%%%%%%%%%%%%%%%%
%%%%%%%%%%%%%%%%%%%%%%%%%%%%%%%%%%%%%%%%%%%%%%
%%%%%%%%%%%%%%%%%%%%%%%%%%%%%%%%%%%%%%%%%%%%%%
%%%%%%%%%%%%%%%%%%%%%%%%%%%%%%%%%%%%%%%%%%%%%%

\section{Other Objectives}
\label{sec:other-objectives}

We now briefly discuss two other related objectives as alternatives to minimizing the {\em high-confidence Bayesian regret}, defined in~\eqref{eq:regret-high-confidence def} and~\eqref{eq:regret-high-confidence-VaR}. These objectives may be preferable in some settings because they can be solved optimally using simple and tractable techniques. 

%%%%%%%%%%%%%%%%%%%%%%%%%%%%%%%%%%%%%%%%%%%%%%
%%%%%%%%%%%%%%%%%%%%%%%%%%%%%%%%%%%%%%%%%%%%%%
%%%%%%%%%%%%%%%%%%%%%%%%%%%%%%%%%%%%%%%%%%%%%%

\subsection{Expected Bayes Regret} 
\label{sec:optim-bayes-regr}

The first objective we discuss is {\em expected Bayes regret}, which is obtained by simply replacing the VaR by expectation in~\eqref{eq:regret-high-confidence-VaR}. In this case, the goal of the agent is to minimizes the expected regret, defined as
\begin{equation*}
   \min_{\bm{\pi}\in \probs{k}} \Ex{ \max_{a\in \mathcal{A}} \; r(a; \bm{\tilde{\theta}}_D) - r(\bm{\pi}; \bm{\tilde{\theta}}_D) }\,.
\end{equation*}
Using the linearity property of the expectation operator and the reward function $r$, we have
\[
  \arg  \min_{\bm{\pi}\in \probs{k}} \Ex{ \max_{a\in \mathcal{A}} \; r(a; \bm{\tilde{\theta}}_D) - r(\bm{\pi}; \bm{\tilde{\theta}}_D) } =
  \arg  \max_{\bm{\pi}\in \probs{k}} \Ex{r(\bm{\pi}; \bm{\tilde{\theta}}_D) } = 
  \arg  \max_{\bm{\pi}\in \probs{k}} r\left(\bm{\pi}; \Ex{\bm{\tilde{\theta}}_D}\right) .
\]
This means it is sufficient to maximize the return for the mean posterior parameter value. In most case, such as when the posterior over $\bm{\tilde{\theta}}_D$ is normal, this is an easy optimization problem to solve optimally. 

%%%%%%%%%%%%%%%%%%%%%%%%%%%%%%%%%%%%%%%%%%%%%%
%%%%%%%%%%%%%%%%%%%%%%%%%%%%%%%%%%%%%%%%%%%%%%
%%%%%%%%%%%%%%%%%%%%%%%%%%%%%%%%%%%%%%%%%%%%%%

\subsection{High-confidence Return}

The second objective we discuss is {\em high-confidence return}, which is obtained by simply replacing the regret with return in~\eqref{eq:regret-high-confidence-VaR}. In this case, the goal of the agent is to minimizes the VaR of the return random variable as
\begin{equation}\label{eq:return-hc}
   \min_{\bm{\pi}\in \probs{k}} \var{1-\delta}{  - r(\bm{\pi}; \bm{\tilde{\theta}}_D) }
   =
   \min_{\bm{\pi}\in \probs{k}} \var{1-\delta}{  - \bm{\pi}\tr \bm{\Phi}\tr \bm{\tilde{\theta}}_D) }\,.
\end{equation}
One may think of this objective as minimizing the regret with respect to $0$. The reward inside is negated because we use VaR which measures costs rather than rewards. Note that $-\var{1-\delta}{-\tilde{x}} \approx \var{\delta}{\tilde{x}}$ with an equality for atomless (continuous) distributions. 

When $\bm{\tilde{\theta}}_D \sim \mathcal{N}(\bm{\mu}, \bm{\Sigma})$, the optimization in~\eqref{eq:return-hc} can be solved \emph{optimally} using an LCB-style algorithm. Then, using the properties of linear transformation of normal distributions, for each $\bm{\pi} \in \probs{ k }$, we obtain 
\[
  \bm{\pi}\tr \bm{\Phi}\tr \bm{\tilde{\theta}}_D \sim  \mathcal{N}(\bm{\pi}\tr \bm{\Phi}\tr \bm{\mu},\, \bm{\pi}\tr \bm{\Phi}\tr \bm{\Sigma} \bm{\Phi} \bm{\pi} )\,.
\]
Combining the objective in~\eqref{eq:return-hc} with~\eqref{eq:var-normal}, we get that the objective is
\begin{equation} \label{eq:var-return-normal}
\max_{\bm{\pi}\in \probs{k}} \bm{\pi}\tr \bm{\Phi}\tr \bm{\mu}  - \sqrt{\bm{\pi}\tr \bm{\Phi}\tr \bm{\Sigma} \bm{\Phi} \bm{\pi}} \cdot z_{1-\delta } \,. 
\end{equation}
Recall that $z_{1-\delta}$ is the $1-\delta$-th quantile of the standard normal distribution. We can reformulate~\eqref{eq:var-return-normal} as the following second-order conic program (for $\delta \le \nicefrac{1}{2}$)
\begin{equation*} 
  \begin{mprog}
    \maximize{\bm{\pi}\in \Real^{k}, \; s\in \Real} \bm{\pi}\tr \bm{\Phi}\tr \bm{\mu} - z_{1-\delta }\cdot s
    \stc s^2 \le \bm{\pi}\tr \bm{\Phi}\tr \bm{\Sigma} \bm{\Phi} \bm{\pi} ,
    \cs \bm{1}\tr \bm{\pi } = 1, \quad \bm{\pi} \ge \bm{0}\,.
  \end{mprog}
\end{equation*}

When restricted to deterministic policies, the optimization in~\eqref{eq:var-return-normal} reduces to a plain deterministic LCB algorithm. The \texttt{FlatOPO} algorithm can be seen as an approximation of~\eqref{eq:var-return-normal} in which $z_{1-\delta}$ is replaced by its upper bound.

%%%%%%%%%%%%%%%%%%%%%%%%%%%%%%%%%%%%%%%%%%%%%%
%%%%%%%%%%%%%%%%%%%%%%%%%%%%%%%%%%%%%%%%%%%%%%
%%%%%%%%%%%%%%%%%%%%%%%%%%%%%%%%%%%%%%%%%%%%%%
%%%%%%%%%%%%%%%%%%%%%%%%%%%%%%%%%%%%%%%%%%%%%%
%%%%%%%%%%%%%%%%%%%%%%%%%%%%%%%%%%%%%%%%%%%%%%

\section{Additional Experimental Details} \label{sec:addit-exper-results}

\begin{figure} 
  \centering
  \includegraphics[width=0.3\linewidth]{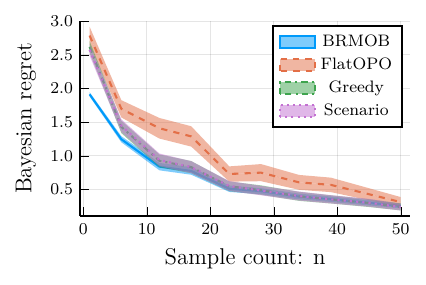}
  \hspace{0.02\linewidth}
  \includegraphics[width=0.3\linewidth]{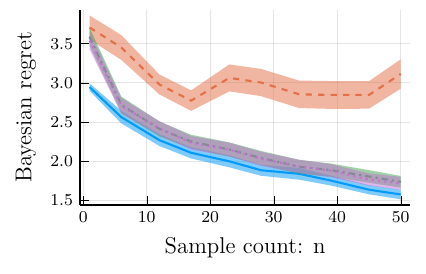}
  \hspace{0.02\linewidth}
  \includegraphics[width=0.3\linewidth]{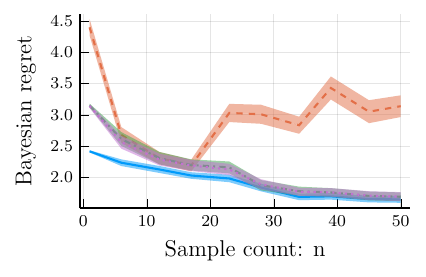}
  \caption{ Bayesian regret with $k=d=5$ {\em (left)}, $k=d=50$ {\em (middle and right)}. The prior mean is $\bm{\mu}_0 = \bm{0}$ {\em (left and middle)} and $(\bm{\mu}_0)_a = \sqrt{a}\;$ for $a = 1, \dots  50$ \emph{(right)}.}
\label{fig:independent-actions-ribbon}
\end{figure}

\begin{figure} 
  \centering
  \includegraphics[width=0.3\linewidth]{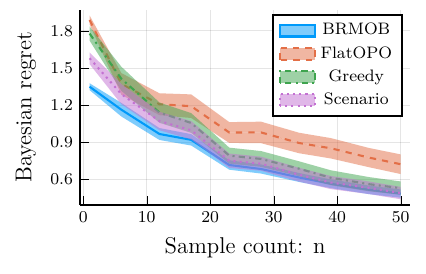}
  \hspace{0.02\linewidth}
  \includegraphics[width=0.3\linewidth]{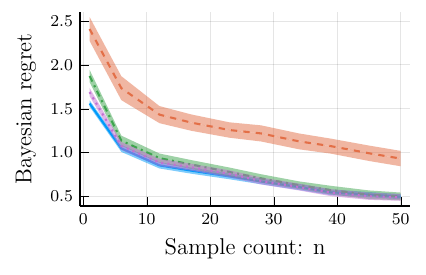}
  \hspace{0.02\linewidth}
  \includegraphics[width=0.3\linewidth]{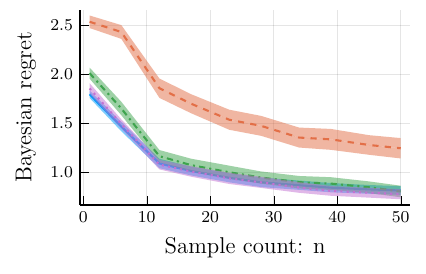}
  \caption{ Bayesian regret with $d=4$ and $k=10$ {\em (left)}, $k=50$ {\em (middle)}, and $k=100$ {\em (right)}.}
\label{fig:dependent-actions-ribbon}
\end{figure}

\begin{figure}
  \centering
  \includegraphics[width=0.3\linewidth]{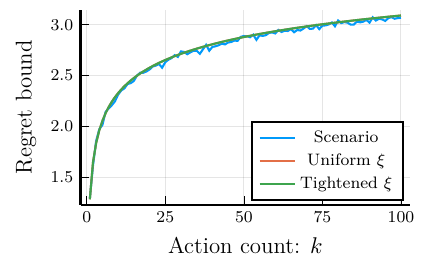}
  \hspace{0.02\linewidth}
  \includegraphics[width=0.3\linewidth]{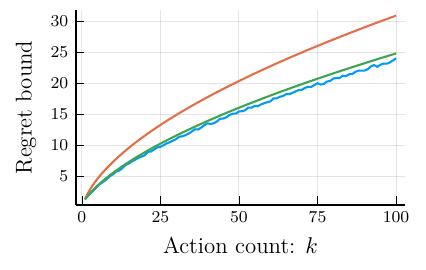}
  \hspace{0.02\linewidth}
  \includegraphics[width=0.3\linewidth]{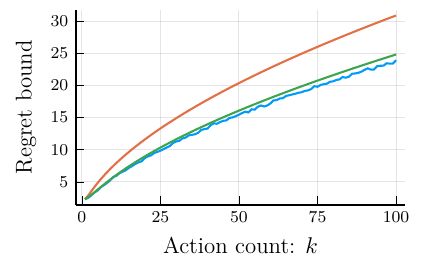}
  \caption{Regret bounds in \cref{thm:var-max-combined} for different choices of $\bm{\xi}$ as a function of $k$. The posterior distribution of $\bm{\tilde{x}}$ is normal with $\bm{\mu} = \bm{0},\, \bm{\Sigma} = \bm{I}$ \emph{(left)}, $\bm{\mu} = \bm{0},\, \Sigma_{aa} = a^2 / k$ \emph{(middle)}, and $\mu_a = a / k,\, \Sigma_{aa} = a^2 / k$ \emph{(right)} with $a = 1, \dots , k$.}
  \label{fig:bounds-demonstrations}
\end{figure}

In this section, we provide some additional experimental results. First, \cref{fig:independent-actions-ribbon,fig:dependent-actions-ribbon} report the same results as \cref{fig:independent-actions,fig:dependent-actions} but also report the 95\% confidence interval for the average regret over the 100 runs. Second, we report the effect of the tightening step on the quality of the bounds in \cref{fig:bounds-demonstrations} compared to a scenario-based estimation. In this simplified example, we fix some policy $\bm{\pi}$ and \emph{assume} the particular parameters of the distribution of $\tilde{x}_a = \bm{\tilde{\theta}}_D\tr \bm{\Phi} (\bm{1}_a - \bm{\pi}), a\in \mathcal{A}$, which is normal by \cref{lem:regret-normal}. The results in the figure show that when the distribution $\bm{\tilde{x}}$ is close to i.i.d. the tightening step does not improve the bound. This is expected since the optimal $\bm{\xi}$ in~\eqref{eq:finetuning} is nearly uniform. However, when the means or variances of the $\tilde{x}_a$ vary across actions $a\in \mathcal{A}$, then the tightening step can significantly reduce the error bound. 

\Cref{fig:runtime} compares the runtime of the algorithms considered as a function of the number of arms. The runtime excludes the time to compute the posterior distribution which is independent of the particular method considered. We use MOSEK to compute the SOCP optimization and do not run any tightening steps. The number of samples $m$ needed for the Scenario algorithm was derived from the Dvoretzky-Kiefer-Wolfowitz bound as
\[
 m\; =\; \frac{100}{(1-0.95)^2} \log \left( \frac{2 \cdot k}{0.05} \right). 
\]
This number of samples guarantees a small sub-optimality gap with probability $95\%$. We suspect, however, that this number of samples can be reduced with more careful assumptions and algorithmic design~\cite{Calafiore2005, Nemirovski2007, Nemirovski2006}. Such analysis is beyond the scope of this work.

\begin{figure}
  \centering
  \includegraphics[width=0.3\linewidth]{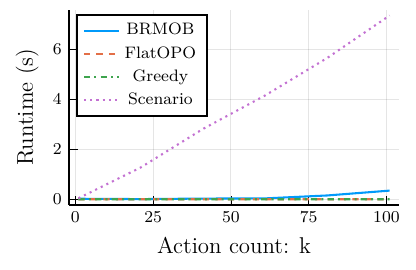}
  \caption{Runtime comparison of algorithms in seconds for a problem with $\bm{\mu} = \bm{0}$ and $\bm{\Sigma} =\bm{I}$.}
  \label{fig:runtime}
\end{figure}

\end{document}